\theoremstyle{definition}
\newtheorem{theorem}{Theorem}[section]
\newtheorem{lemma}[theorem]{Lemma}
\newtheorem{corollary}[theorem]{Corollary}
\newcommand{\mathbbm}[1]{\text{\usefont{U}{bbold}{m}{n}#1}}
\newcommand{\tp}{{\scriptscriptstyle\mathsf{T}}}
\begin{document}
\title[Neural LU and Toeplitz decompositions]{LU decomposition and Toeplitz decomposition\\
of a neural network}
\author[Y.~Liu]{Yucong Liu}
\address{Department of Statistics, University of Chicago, Chicago, IL 60637}
\email{yucongliu@uchicago.edu}
\author[S.~Jiao]{Simiao Jiao}
\address{Department of Statistics, University of Chicago, Chicago, IL 60637}
\email{smjiao@uchicago.edu}
\author[L.-H.~Lim]{Lek-Heng Lim}
\address{Computational and Applied Mathematics Initiative, University of Chicago, Chicago, IL 60637}
\email{lekheng@uchicago.edu}

\begin{abstract}
It is well-known that any matrix $A$ has an LU decomposition. Less well-known is the fact that it has a `Toeplitz decomposition' $A = T_1 T_2 \cdots T_r$ where $T_i$'s are Toeplitz matrices. We will prove that any continuous function $f : \mathbb{R}^n \to \mathbb{R}^m$ has an approximation to arbitrary accuracy by a neural network that takes the form $L_1 \sigma_1 U_1 \sigma_2 L_2 \sigma_3 U_2 \cdots L_r \sigma_{2r-1} U_r$, i.e., where the weight matrices alternate between lower and upper triangular matrices, $\sigma_i(x) \coloneqq \sigma(x - b_i)$ for some bias vector $b_i$, and the activation $\sigma$ may be chosen to be essentially any uniformly continuous nonpolynomial function. The same result also holds with Toeplitz matrices, i.e., $f \approx T_1 \sigma_1 T_2 \sigma_2 \cdots \sigma_{r-1} T_r$ to arbitrary accuracy, and likewise for Hankel matrices. A consequence of our Toeplitz result is a  fixed-width universal approximation theorem for convolutional neural networks, which so far have only  arbitrary width versions. Since our results apply in particular to the case when $f$ is a general neural network, we may regard them as LU and Toeplitz decompositions of a neural network. The practical implication of our results is that one may vastly reduce the number of weight parameters in a neural network without sacrificing its power of universal approximation. We will present several experiments on real data sets to show that imposing such structures on the weight matrices sharply reduces the number of training parameters with almost no noticeable effect on test accuracy.
\end{abstract}
\maketitle

\section{Introduction}

Among the numerous results used to justify and explain the efficacy of feed-forward neural networks, possibly the best known are the universal approroximation theorems of various stripes. These theorems explain the expressive power of neural networks by showing that they can approximate various classes of functions to arbitrary accuracy under various measures of accuracy. The universal approximation theorems in the literature may be  divided into two categories, applying respectively to:
\begin{enumerate}[\upshape (i)]
\item \emph{shallow wide networks:} neural networks of fixed depth and arbitrary width;
\item \emph{deep narrow networks:} neural networks with fixed width and arbitrary depth.
\end{enumerate}
In the first category, we have the celebrated results of \citet{cybenko1989approximation,hornik1991approximation,pinkus1999approximation}, et al. We state the last of these for easy reference:
\begin{theorem}[\citealp{pinkus1999approximation}]\label{thm:uat1}
Let $\sigma \in C(\mathbb{R})$ and $\Omega \subseteq \mathbb{R}^n$ be compact. The set of $\sigma$-activated neural networks with one hidden layer neural networks and arbitrary width is dense in $C(\Omega,\mathbb{R}^m)$ with respect to the uniform norm if and only if $\sigma$ is not a polynomial.
\end{theorem}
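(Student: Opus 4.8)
The plan is to first reduce to the scalar-valued case $m = 1$: uniform approximation of a vector-valued $f = (f_1,\dots,f_m)$ reduces to approximating each coordinate $f_j$ and then merging the resulting networks into a single hidden layer by taking the union of their hidden units (still of finite, hence arbitrary, width). So it suffices to show that
\[
\mathcal{M}(\sigma) \coloneqq \operatorname{span}\{x \mapsto \sigma(w^{\tp} x + b) : w \in \mathbb{R}^n,\ b \in \mathbb{R}\}
\]
is dense in $C(\Omega)$ exactly when $\sigma$ is nonpolynomial. The necessity direction is easy: if $\sigma$ is a polynomial of degree $d$, then every $\sigma(w^{\tp} x + b)$ is a polynomial in $x$ of degree at most $d$, so $\mathcal{M}(\sigma)$ lies in the finite-dimensional space of such polynomials and cannot be dense in the infinite-dimensional $C(\Omega)$.

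For sufficiency I would treat the univariate case $n = 1$ first and then lift to $\mathbb{R}^n$ via ridge functions. In one variable the heart of the argument is a differentiation-in-the-parameter trick, which needs smoothness, so the first step is to pass from $\sigma$ to a mollified version $\sigma_\phi \coloneqq \sigma * \phi \in C^\infty$ for a compactly supported smooth bump $\phi$. Writing $\sigma_\phi$ as a uniform limit of Riemann sums of translates of $\sigma$ shows that $\sigma_\phi$, together with all its scalings and shifts, lies in the closure $\overline{\mathcal{M}(\sigma)}$, so $\overline{\mathcal{M}(\sigma_\phi)} \subseteq \overline{\mathcal{M}(\sigma)}$; and one checks that $\phi$ can be chosen so that $\sigma_\phi$ stays nonpolynomial whenever $\sigma$ is. For the smooth $\sigma_\phi$, forming difference quotients $[\sigma_\phi((w+h)x + b) - \sigma_\phi(wx+b)]/h$ and letting $h \to 0$ produces $x\,\sigma_\phi'(wx+b)$ in the closure, and iterating yields $x^k \sigma_\phi^{(k)}(wx+b)$. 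Setting $w = 0$ and choosing $b = b_k$ with $\sigma_\phi^{(k)}(b_k) \neq 0$ --- possible for every $k$ precisely because a smooth nonpolynomial function has no identically vanishing derivative --- shows that every monomial $x^k$, hence every polynomial, lies in $\overline{\mathcal{M}(\sigma)}$. The Weierstrass approximation theorem then gives density in $C(K)$ for any compact $K \subset \mathbb{R}$.

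To lift to $n$ dimensions I would invoke the classical density of ridge functions: the span of $\{x \mapsto g(w^{\tp}x) : g \in C(\mathbb{R}),\ w \in \mathbb{R}^n\}$ is dense in $C(\Omega)$. It therefore suffices to approximate a single ridge function $g(w^{\tp}x)$. For fixed $w$, the linear image $w^{\tp}x$ ranges over a compact interval as $x$ ranges over $\Omega$, so the univariate result approximates $g$ there by $\sum_i c_i \sigma(a_i t + b_i)$; substituting $t = w^{\tp}x$ gives $\sum_i c_i \sigma((a_i w)^{\tp} x + b_i) \in \mathcal{M}(\sigma)$, which completes the argument.

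I expect the main obstacle to be the univariate sufficiency step, and within it the mollification reduction: the differentiation trick is clean once $\sigma$ is smooth, but carefully justifying that $\sigma_\phi \in \overline{\mathcal{M}(\sigma)}$ (uniform convergence of the Riemann sums on the compact set) and that some mollifier keeps $\sigma_\phi$ nonpolynomial is where the real work lies. The density of ridge functions is a standard ingredient I would cite rather than reprove.
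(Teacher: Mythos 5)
The paper does not prove this statement---it is quoted verbatim from \citet{pinkus1999approximation}---and your argument is an accurate reconstruction of Pinkus's own proof: necessity via the finite-dimensionality of the space of polynomials of bounded degree, sufficiency via mollification, iterated difference quotients in the weight parameter to extract the monomials $x^k$, and reduction of the multivariate case to the univariate one through the density of ridge functions. The outline is correct, and you have correctly identified the one genuinely delicate point, namely verifying that some mollifier leaves $\sigma * \phi$ nonpolynomial (a Baire-category argument on the degrees of $\sigma * \phi$ over all test functions $\phi$), which is exactly where Pinkus also has to do the real work.
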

In the second category, an example is provided by \citet{kidger2020universal}, again quoted below for easy reference:
\begin{theorem}[\citealp{kidger2020universal}]\label{thm:uat2}
Let $\sigma \in C(\mathbb{R})$ be a nonpolynomial function, continuously differentiable with nonzero derivative on at least one point. Let $\Omega \subseteq \mathbb{R}^n$ be compact. Then the set of $\sigma$-activated neural networks with fixed width $m+n+1$ and arbitrary depth is dense in $C(\Omega, \mathbb{R}^m)$ with respect to the uniform norm.
\end{theorem}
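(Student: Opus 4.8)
The plan is to derive Theorem~\ref{thm:uat2} from the shallow-network result of Theorem~\ref{thm:uat1} by a ``register'' construction that trades width for depth. First I would invoke Theorem~\ref{thm:uat1}: given $f \in C(\Omega,\mathbb{R}^m)$ and $\varepsilon > 0$, there is a one-hidden-layer network
\[
g(x) = \sum_{i=1}^N c_i\, \sigma(w_i^\tp x + \theta_i), \qquad c_i \in \mathbb{R}^m,\ w_i \in \mathbb{R}^n,\ \theta_i \in \mathbb{R},
\]
with $\lVert f - g\rVert_{\infty,\Omega} < \varepsilon/2$. The width $N$ is unconstrained, so the entire burden of the proof is to re-express $g$, up to an additional error $\varepsilon/2$, as a deep network of width $m+n+1$.

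The key enabling device is an approximate identity channel. Since $\sigma$ is continuously differentiable with $\sigma'(t_0) \neq 0$ for some $t_0$, the difference quotient
\[
\phi_h(t) \coloneqq \frac{\sigma(t_0 + h t) - \sigma(t_0)}{h\,\sigma'(t_0)}
\]
converges to $t$ uniformly on every compact interval as $h \to 0$. A single neuron computes $\sigma(t_0 + ht)$, and the surrounding affine maps $t \mapsto ht + t_0$ and the normalization by $h\sigma'(t_0)$ fold into the weights of the adjacent layers; hence one scalar can be transmitted, nearly unchanged, across a layer. The differentiability hypothesis on $\sigma$, absent from Theorem~\ref{thm:uat1}, is used precisely here.

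With this device I would organize the depth into $N$ blocks, each advancing a hidden state $(x, s)$, where $x \in \mathbb{R}^n$ is a preserved copy of the input and $s \in \mathbb{R}^m$ is a running partial sum of $g$. The $n+m$ coordinates of $(x,s)$ are carried through identity channels, and the single remaining neuron — the $(m+n+1)$-th — serves as scratch to evaluate $\sigma(w_i^\tp x + \theta_i)$ from the preserved input in block $i$. The next layer then folds the update $s \mapsto s + c_i\,\sigma(w_i^\tp x + \theta_i)$ into the affine pre-activation of the accumulator's identity channels, so the addition and the carrying of $x$ happen together. After $N$ blocks the accumulator holds an approximation of $g(x)$, extracted by a final affine layer, and every hidden layer has width exactly $m+n+1$.

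The main obstacle, and the part demanding care, is error control across the linearly many layers. The identity channels are only approximate, so the preserved input $x$ drifts from block to block; since this drifted copy feeds the evaluation of every later neuron $\sigma(w_i^\tp x + \theta_i)$, the errors compound, and the accumulator likewise accrues deviations. I would show by induction on the block index that the state deviation grows by at most a controllable factor per block, depending on the modulus of continuity of $\sigma$ on the relevant compact range and on $\max_i(\lVert c_i\rVert, \lVert w_i\rVert)$. Because $N$ is now fixed, one may then choose the channel parameter $h$ small enough — as a function of $N$, these moduli, and $\varepsilon$ — that the accumulated error stays below $\varepsilon/2$. The triangle inequality with the first estimate yields $\lVert f - (\text{deep net})\rVert_{\infty,\Omega} < \varepsilon$, completing the argument.
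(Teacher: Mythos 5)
This theorem is quoted from \citet{kidger2020universal} and the paper does not reprove it, but your register construction --- $n$ approximate-identity channels preserving the input, $m$ accumulator channels, one scratch neuron per block, with the identity channels built from the difference quotient at a point of nonzero derivative and the whole thing seeded by Theorem~\ref{thm:uat1} --- is essentially the original proof of the cited result, and the same $\rho_h$ device reappears in this paper as Lemma~\ref{lem:approx} and in the proof of Theorem~\ref{thm:depth}. Your error-control plan (fix $N$ first, then shrink the channel parameter $h$ using the moduli of continuity on the relevant compact sets) is sound, so the proposal is correct and takes the standard route.
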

In all these results, the weight matrices used in each layer are assumed to be dense general matrices; in particular, these neural networks are fully connected. The goal of our article is to show that even when we impose special structures on the weight matrices --- upper and lower triangular, Toeplitz or Hankel --- we will still have the same type of universal approximation results, for both shallow wide and deep narrow netowrks alike. In addition, our numerical experiments will show that when kept at the same depth and width, a neural network with these structured weight matrices suffers almost no loss in expressive powers, but requires only a fraction of the parameters --- note that an $m \times n$ triangular matrix with $p = \max(m,n)$ has at most $p(p+1)/2$ parameters whereas an $m \times n$ Toeplitz or Hankel matrix has exactly $m +n -1$ parameters.

The saving in training cost goes beyond a mere reduction in the number of weight parameters. The forward and backward propagations in the training process ultimately reduce to matrix-vector products. For Toeplitz or Hankel matrices, these come at a cost of $O(n \log n)$ operations as opposed to the usual $O(n^2)$.

An alternative way to view our results is that these are ``LU decomposition'' and ``Toeplitz decomposition'' of a nonlinear function in the context of neural networks. A departure from the case of a linear functions is that an LU decomposition of a nonlinear function requires not just one lower-triangular matrix and one upper-triangular matrix but several of these alternating between lower-triangular and upper-triangular, and sandwiching an activation. The Toeplitz (or Hankel) decomposition of a linear function is a consequence of the following result, which can be readily extended to $m \times n$ matrices, as we will see in Section~\ref{sec:deep}.
\begin{theorem}[\citealp{ye2016every}]\label{thm:toep}
Every $n \times n$ matrix can be expressed as a product of $2n + 5$ Toeplitz matrices or $2n + 5$ Hankel matrices.
\end{theorem}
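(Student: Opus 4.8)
The plan is to treat the Toeplitz statement as the substantive one and to deduce the Hankel version from it at the end. Throughout I write $\mathrm{Toep}(n) \subseteq \mathbb{C}^{n \times n}$ for the $(2n-1)$-dimensional linear space of $n \times n$ Toeplitz matrices, and I let $Z$ denote the lower shift, so that lower-triangular Toeplitz matrices are exactly the polynomials $p(Z)$ and upper-triangular ones the polynomials $q(Z^{\tp})$. First I would locate the source of difficulty with a dimension count. The product map
\[
\Phi_r \colon \mathrm{Toep}(n)^{\times r} \to \mathbb{C}^{n\times n}, \qquad (T_1,\dots,T_r)\mapsto T_1T_2\cdots T_r,
\]
is polynomial, so by Chevalley its image is constructible and its closure is an irreducible variety $\mathcal{V}_r$. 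Rescaling the factors by scalars of product one leaves $\Phi_r$ unchanged, so a generic fibre has dimension at least $r-1$, whence $\dim \mathcal V_r \le r(2n-1)-(r-1)=2r(n-1)+1$; conversely one computes $\dim \mathcal V_r$ as the rank of $d\Phi_r$ at a conveniently chosen point with each $T_i$ invertible. Matching against $n^2$ gives $r \ge (n+1)/2$, which already suggests that $O(n)$ factors should make $\mathcal V_r=\mathbb C^{n\times n}$, so that a \emph{generic} matrix is a product of $O(n)$ Toeplitz matrices.

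The heuristic leaves two gaps: passing from ``generic'' to \emph{every} matrix, and pinning down the explicit constant $2n+5$. For both I would switch to a constructive deflation. The lemma to aim for is: given $A\in\mathbb C^{n\times n}$, there exist a bounded number (ideally two) of Toeplitz matrices $T_{\mathrm L},T_{\mathrm R}$ with
\[
T_{\mathrm L}\, A\, T_{\mathrm R} = \begin{pmatrix} 1 & 0 \\ 0 & A' \end{pmatrix}
\]
for some $A'\in\mathbb C^{(n-1)\times(n-1)}$, reducing the problem to size $n-1$. Iterating this step $n-1$ times and keeping careful track of how the peeled-off factors accumulate yields a linear count $2n+O(1)$, the ``$+5$'' absorbing base cases together with the corrections needed when $A$ fails the genericity required to carry out one deflation step. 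Singular $A$ need no separate treatment: creating the corner pivot only requires a single nonzero entry to work with, $A'$ is permitted to be rank-deficient, and the trivial cases (such as $A=0$) are already products of Toeplitz matrices.

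I expect the deflation lemma to be the main obstacle. One must exhibit $T_{\mathrm L},T_{\mathrm R}$ explicitly while guaranteeing that they are invertible and that the residual block $A'$ remains in a position to continue the induction; the degenerate configurations, where a vanishing entry cannot be repaired by a single Toeplitz multiplication, are precisely what force the extra constant factors and make the bookkeeping delicate. Finally, the Hankel statement follows formally: the exchange matrix $J$ (ones on the anti-diagonal) is Hankel, $J^2=I$, and $JT$ is Hankel exactly when $T$ is Toeplitz, so inserting cancelling pairs $JJ=I$ between consecutive factors of $A=T_1\cdots T_{2n+5}$ rewrites the product as a product of the same number of Hankel matrices (up to an additive $1$ coming from the parity of the count), yielding the Hankel decomposition with an identical bound.
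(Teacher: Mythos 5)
First, a point of reference: the paper does not prove this statement at all --- it is quoted from \citet{ye2016every} and used as a black box in Section~\ref{sec:deep} --- so there is no in-paper proof to compare against, and your proposal has to stand on its own. Your opening dimension count is sound and is essentially how the cited source obtains its \emph{generic} bound of $\lfloor n/2\rfloor+1$ factors: the image of $\Phi_r$ is constructible, its closure has dimension at most $r(2n-1)-(r-1)$, and dominance for $r\approx n/2$ gives decomposability of a Zariski-dense set of matrices. Your Hankel reduction via the exchange matrix $J$ is also fine modulo the parity loss of one factor that you acknowledge. The problem is the centrepiece of the proposal, the corner-deflation lemma, which is where all the content of the universal $2n+5$ bound would have to live and which does not survive scrutiny.

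There are two structural obstructions. First, even granting that $T_{\mathrm L} A T_{\mathrm R} = 1 \oplus A'$ can be achieved with invertible Toeplitz $T_{\mathrm L},T_{\mathrm R}$, the recursion does not compose: applying the lemma to $A'$ produces factors of size $n-1$, and substituting back forces you to multiply $n\times n$ matrices of the form $1\oplus S$ with $S$ Toeplitz of size $n-1$. Such a bordered matrix is Toeplitz only when $S=I$ (its first row and column are $e_1^\tp$ and $e_1$, which is incompatible with constant nonzero off-diagonals), so the ``accumulated'' factors are not Toeplitz matrices and the induction yields nothing about products of Toeplitz matrices of size $n$. Second, the factors you actually keep are $T_{\mathrm L}^{-1}$ and $T_{\mathrm R}^{-1}$, and the inverse of a Toeplitz matrix is not Toeplitz in general; re-decomposing each inverse is circular. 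The one class closed under inversion --- the triangular Toeplitz matrices $p(Z)$ and $q(Z^\tp)$ you mention --- leaves only $2n$ free parameters against $2n-1$ conditions with no slack for degeneracies, and amounts to asserting a Toeplitz-triangular LU-type normal form $A = L(1\oplus A')U$ for \emph{every} matrix, which fails already for matrices with vanishing leading principal minors. The cited proof does not deflate corner by corner; the passage from ``generic'' to ``every'' and the explicit constant $2n+5$ are obtained by a different and more delicate argument. As written, your proposal is a correct heuristic for the generic $O(n)$ statement together with an unproved --- and, in its present form, unprovable --- lemma standing in for the actual theorem.
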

Again we will see that this also applies to a nonlinear continuous function as long as we introduce an activation function between every Toeplitz or Hankel factor. Another caveat in these results is that the exact equality used in linear algebra is replaced by the most common notion of equality in approximation theory, namely, equality up to an arbitrarily small error.
As in Theorems~\ref{thm:uat1} and \ref{thm:uat2}, our results will apply with essentially any nonpolynomial continuous activations, including but not limited to common ones like ReLU, leaky ReLU, sigmoidal, hyperbolic tangent, etc.

We will prove these results in Section~\ref{sec:main}, with shallow wide neural networks in Section~\ref{sec:wide} and deep narrow neural networks Section~\ref{sec:deep}, after discussing prior works in Section~\ref{sec:related} and setting up notations in Section~\ref{sec:prelim}. The experiments showing the practical side of these results are in Section~\ref{sec:exp} with a cost analysis in Section~\ref{sec:cost}.

\subsection{Prior works}\label{sec:related}

We present a more careful discussions of existing works in the literature, in rough chronological order. To the best of our knowledge, there are six main lines of works related to ours. While none replicates our results in Section~\ref{sec:main}, they show a progression towards to our work in spirit --- with the increase in width and depth of neural networks, it has become an important endeavor to reduce the number of redundant training parameters through other means.

\subsubsection*{Shallow wide neural networks:} The earliest universal approximation theorems are for one-hidden-layer neural networks with arbitrary width, beginning with the eponymous theorem of \citet{cybenko1989approximation}, which shows that a fully-connected sigmoid-activated network with one hidden layer and arbitrary number of neurons can approximate any continuous function on the unit cube in $\mathbb{R}^n$ up to arbitrary accuracy. Cybenko's argument  also works for ReLU activation and could be extended to a fixed number of hidden layers simply by requiring that the additional hidden layers approximate an identity map. \citet{HORNIK1989359} obtained the next major generalization to nondecreasing activations with $\lim_{x \to -\infty} \sigma(x) = 0$ and $\lim_{x \to +\infty} \sigma(x) = 1$. The most general universal approximation theorem along this line is that of  \citet{pinkus1999approximation} stated earlier in Theorem~\ref{thm:uat1}. The striking aspect is that it is a necessary and sufficient condition, showing that such universal approximation property characterizes the ``nonpolynomialness'' of the activation function.

\subsubsection*{Deep narrow networks:} With the advent of deep neural networks, the focus has changed to keeping width fixed and allowing depth to increase. \citet{lu2017expressive} showed that  ReLU-activated neural networks of width $n+4$ and arbitrary depth are dense in $L^1(\mathbb{R}^n)$. \citet{hanin2017approximating} showed that such neural networks of width $m+n$ are dense in $C(\Omega, \mathbb{R}^m)$ for any compact $\Omega \subseteq \mathbb{R}^n$. The aforementioned Theorem~\ref{thm:uat2} of \citet{kidger2020universal} is another alternative with more general continuous activations and with width $m+n+1$. An extreme case is provided by \citet{lin2018resnet} for ResNet with a single neuron per hidden layer but with depth going to infinity.

\subsubsection*{Width-depth tradeoff:} The tradeoff between width and depth of a neural network is now well studied. The results of  \citet{eldan2016power, telgarsky2016benefits} explain the benefits of having more layers --- a deep neural network cannot be well approximated by shallow neural networks unless they are exponentially large.  On the other hand, the results of \citet{johnson2018deep, park2021minimum} revealed the limitations of deep neural networks --- they require a minimum width for  universal approximation; although these results do not cover exotic structures like ResNet. There are also studies on the memory capacity of wide and deep neural networks \citep{yun2019small, vershynin2020memory}.

\subsubsection*{Neural network pruning:} Pruning refers to techniques for eliminating redundant weights from neural networks and it has a long history \citep{lecun1989optimal, hassibi1992second, han2015learning, li2016pruning}. A recent highlight is  the lottery ticket hypothesis proposed in \citet{frankle2018the} that led to extensive follow-up work \citep{morcos2019one, frankle2020linear, malach2020proving}. Our results in Section~\ref{sec:main} may be viewed as a particularly aggressive type of pruning whereby we either set half the weight parameters to zero, as in the LU case, or even reduce the number of weight parameters by an order of magnitude, from $O(n^2)$ to $O(n)$, as in the Toeplitz/Hankel case.

\subsubsection*{Convolutional neural networks:} The result closest to ours is likely the universal approximation theorem for deep convolutional neural network of \citet{zhou2020universality}. However his result provides the necessary width and depth in terms of the approximating accuracy $\varepsilon$, and as such requires arbitrary width and depth at the same time. We will deduce an alternative version with fixed width in Corollary~\ref{cor:cnn}.

\subsubsection*{Hardware acceleration:} In the context of accelerating training of neural networks via GPUs, FPGAs, ASICs, and other specialized hardware (e.g., Google's TPU, Nvidia's H100 AI processor),  there have been prior works on exploiting structured matrix algorithms for matrix-vector multiply, notably for triangular matrices in \citep{inoue2019efficient} and Toeplitz matrices in \citep{kelefouras2014methodology}.

\subsection{Notations and conventions}\label{sec:prelim}

We write $\lVert\, \cdot\,\rVert $ for both the Euclidean norm on $\mathbb{R}^n$ and the Frobenius norm on $\mathbb{R}^{m \times n}$.  The zero matrix in $\mathbb{R}^{m\times n}$ is denoted $\mathbbm{0}_{m\times n}$. The zero vector and the vector of all ones in $\mathbb{R}^n$ will be denoted $\mathbbm{0}_n$ and $\mathbbm{1}_n $ respectively. 

Let $A = (a_{ij}) \in \mathbb{R}^{m\times n}$. If $a_{ij} = 0$ whenver $i>j$, then $A$ is \emph{upper triangular}; if $a_{ij} = 0$ whenever $i < j$, then $A$ is lower triangular. A matrix is  \emph{Toeplitz} (resp.\ \emph{Hankel}) if has equal entries along its diagonals (resp.\ reverse diagonals). More precisely, $A$ is Toeplitz if $a_{i,i+r} = a_{j,j+r}$ whenever $-m+1\leq r\leq n-1$, $1\leq i,j\leq m$, $1\leq i+r,j+r\leq n$. Similarly $A$ is Hankel if $a_{i,r-i} = a_{j,r-j}$ whenever $2\leq r\leq m+n$, $1\leq i,j\leq m$, $1\leq r-i,r-j,\le n$. Note that the definitions of these structured matrices do not require that $m =n$. 

An $m\times n$ Toeplitz or Hankel matrix requires only $m + n-1$ parameters to specify --- standard convention is to just store the first row and first column of a Toeplitz matrix and the first row and last column of a Hankel matrix. For example, when $m =n$, we have
\[
T=
\begin{bmatrix}
a_{0} & a_{-1} &  & a_{1-n}\\
a_{1} & a_{0} & \ddots & \\
& \ddots & \ddots & a_{-1}\\
a_{n-1} &  & a_{1} & a_{0}
\end{bmatrix},  \qquad
H=
\begin{bmatrix}
a_{0} & a_1 & \cdots & a_{n-1} \\ 
a_1 & a_2 & \iddots & a_{n} \\ 
\vdots & \iddots & \iddots &  \vdots \\ 
a_{n-1} & a_{n} &  \cdots & a_{2n-2}
\end{bmatrix}.
\]

We write $C(\Omega,\mathbb{R}^m)$ for the set of continuous functions on $\Omega$ taking values in $\mathbb{R}^m$, with $C(\Omega)$ for the special case when $m=1$. Throughout this article we will use the uniform norm for all function approximations; there will be no confusion with the norms introduced above as we will always specify our uniform norm explicitly as $\sup_{x\in \Omega}$.

Any univariate function $\sigma: \mathbb{R} \to \mathbb{R}$ defines a \emph{pointwise activation} $\sigma : \mathbb{R}^n \to \mathbb{R}^n$ for any $n \in \mathbb{N}$ through applying $\sigma$ coordinatewise to vectors in $\mathbb{R}^n$. We will sometimes drop the parentheses, writing $\sigma x$ to mean $\sigma(x)$, to reduce notational clutter.

A $k$-layer neural network $\nu:\mathbb{R}^n \rightarrow \mathbb{R}^m$ has the following structure:
\[
    \nu(x) = A_k \sigma_{k-1} A_{k-1} \sigma_{k-1} \cdots  \sigma_2 A_{2}\sigma_1 A_1 x + b_k
\]
for any input $x \in \mathbb{R}^n$, \emph{weight} matrix  $A_i \in \mathbb{R}^{n_i\times n_{i-1}}$, 
\[
\sigma_i(x) \coloneqq \sigma(x +b_i)
\]
with $b_i \in \mathbb{R}^{n_i}$ the \emph{bias} vector, and $\sigma$ the \emph{activation} function. 
The output size of the $i$th \emph{layer}  is  $n_i$ and always equals the input size of $(i+1)$th layer, with $n_0 = n$ and $n_k= m$. 

\section{Universal approximation by structured neural networks}\label{sec:main}

We present our main results and proofs, beginning with shallow wide networks and followed by deep narrow networks.

\subsection{Fixed depth, arbitrary width}\label{sec:wide}

This is an easy case that we state for completeness. Our universal approximation result in this case only holds for real-valued functions. The more interesting case for arbitrary depth neural networks in Section~\ref{sec:deep} will hold for vector-valued functions.

We begin with an observation that, if width is not a limitation, then any general weight matrix may be transformed into a Toeplitz or Hankel matrix.
\begin{lemma}[General matrices to Toeplitz/Hankel matrices]\label{lem:convert}
Any matrix $A \in \mathbb{R}^{m\times n}$ can be transformed into a Toeplitz or Hankel matrix by inserting additional rows. 
\end{lemma}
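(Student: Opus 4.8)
The plan is to exploit the defining feature of a Toeplitz matrix: each row is a one-step shift of the row above it. Writing an $N \times n$ Toeplitz matrix as $t_{ij} = c_{j-i}$, its $i$th row is the window $(c_{1-i}, c_{2-i}, \ldots, c_{n-i})$, so rows $p$ and $q$ with $p < q$ involve a common parameter $c_k$ precisely for those $k$ lying in both $\{1-p,\dots,n-p\}$ and $\{1-q,\dots,n-q\}$. This intersection is empty exactly when $q - p \ge n$. Hence if we are willing to place any two prescribed rows at least $n$ apart, they impose no joint constraint and may be assigned independently.

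Concretely, let $r_1^\tp, \ldots, r_m^\tp$ denote the rows of $A$ and set $N = (m-1)n + 1$. I would build an $N \times n$ Toeplitz matrix $T$ by requiring its row $p_i \coloneqq (i-1)n + 1$ to equal $r_i^\tp$ for each $i$, and then check that this determines the defining sequence $(c_k)_{k=1-N}^{\,n-1}$ consistently. Indeed, fixing row $p_i$ pins down $c_k$ for $k$ in a block of $n$ consecutive integers, and the spacing $p_{i+1} - p_i = n$ guarantees these $m$ blocks are pairwise disjoint; a short count shows they in fact partition the whole index range $\{1-N, \ldots, n-1\}$, so every $c_k$ is fixed by exactly one entry of one $r_i$ and no conflict can arise. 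The remaining $N - m = (m-1)(n-1)$ rows of $T$ are the inserted rows, whose entries are whatever the Toeplitz structure forces. By construction $A$ is the submatrix of $T$ formed by rows $1, n+1, \ldots, (m-1)n+1$.

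The Hankel case is identical after replacing the window description by $a_{ij} = h_{i+j}$, whose $i$th row is $(h_{i+1}, \ldots, h_{i+n})$: two rows are again independent once separated by $n$, and placing $r_i^\tp$ in row $p_i$ pins $h_k$ on the disjoint blocks $\{p_i+1, \ldots, p_i+n\}$. Alternatively one notes that reversing the column order turns a Toeplitz matrix into a Hankel one, reducing the second claim to the first up to a fixed permutation of columns.

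There is no genuinely hard step here; the one point requiring care is the index bookkeeping --- verifying that the spacing $n$ renders the parameter blocks disjoint (so the prescribed rows never contradict one another) and that they exhaust the admissible index range. Everything else is immediate from the shift structure of Toeplitz and Hankel matrices.
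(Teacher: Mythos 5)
Your construction is the same as the paper's: the paper inserts $n-1$ rows between each adjacent pair of rows of $A$, which is exactly your placement of row $r_i^\tp$ at position $(i-1)n+1$ in an $N\times n$ Toeplitz (or Hankel) matrix with $N=(m-1)n+1=m+(m-1)(n-1)$. The only difference is presentational --- the paper argues by displaying the filled-in matrix for a generic adjacent pair, while you make the underlying reason explicit by checking that the parameter blocks $\{1-p_i,\dots,n-p_i\}$ are disjoint and partition the index range; both are correct.
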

\begin{proof}
This is best illustrated by way of a simple example first. For a $2\times 2$ matrix
\[
    \begin{bmatrix}
  a_{11}& a_{12}\\
  a_{21}& a_{22}
\end{bmatrix},
\]
inserting a row vector in the middle makes it Toeplitz:
\[
    \begin{bmatrix}
  a_{11}& a_{12}\\
  a_{22}& a_{11}\\
  a_{21}& a_{22}
\end{bmatrix};
\]
and similarly inserting a different row vector in the middle makes it Hankel:
\[
    \begin{bmatrix}
  a_{11}& a_{12}\\
  a_{12}& a_{21}\\
  a_{21}& a_{22}
\end{bmatrix}.
\]
For an $m \times n$ matrix
\[
A = \begin{bmatrix}
  a_{11}& a_{12} & \cdots &a_{1n} \\
  a_{21}&a_{22}  & \cdots & a_{2n}\\
  \vdots& \vdots & \ddots & \vdots\\
  a_{m1}& a_{m2} &\cdots  & a_{mn}
\end{bmatrix}
\]
inserting $n-1$ rows between the first and the second row
\[
\begin{bmatrix}
  a_{11}& a_{12} & \cdots &a_{1n} \\
  a_{21}&a_{22}  & \cdots & a_{2n}\\
\end{bmatrix}
\]
turns it Toeplitz
\[
\begin{bmatrix}
  a_{11}& a_{12}  & \cdots  &a_{1n} \\
  a_{2n}&a_{11}  & \ddots  & \vdots \\
\vdots      &  \ddots& \ddots & a_{12} \\
  a_{22}&  &   a_{2n}&a_{11} \\
  a_{21}& a_{22}  &  \cdots &a_{2n}
\end{bmatrix}.
\]
Now repeat this to the remaining pairs of adjacent rows of $A$, we see that after inserting a total of $(m-1)(n-1)$ rows, we obtain a Toeplitz matrix.  The process for transforming a general $m \times n$ matrix into a Hankel matrix by inserting rows is similar.
\end{proof}
Evidently, the statement and proof of Lemma~\ref{lem:convert} remain true if `row' is replaced by `column' but we will only need the row version in our proofs.

\begin{theorem}[Universal approximation by structured neural networks I]
Let $\Omega \subseteq \mathbb{R}^n$ be compact and $\sigma:\mathbb{R}\to\mathbb{R}$ be nonploynomial. For any $f\in C(\mathbb{R}^n)$ and any $\varepsilon > 0$, we have
\[
\sup_{x\in \Omega} \;  |f(x) -\nu(x)| \le \varepsilon
\]
for some one-layer neural network $\nu : \mathbb{R}^n \to \mathbb{R}$,
\[
\nu(x) = a^\tp \sigma(Ax + b),
\]
with $a,b\in \mathbb{R}^m$, $m \in \mathbb{N}$, and $A \in \mathbb{R}^{m\times n}$ that can be chosen to be
\begin{enumerate}[\upshape (i)]
  \item a Toeplitz matrix,
  \item a Hankel matrix,
  \item or a lower triangular matrix.
\end{enumerate}
\end{theorem}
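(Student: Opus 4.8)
The plan is to reduce the whole statement to Pinkus's theorem (Theorem~\ref{thm:uat1}) and then \emph{upgrade} the unstructured weight matrix it produces to a structured one at no cost, exploiting that the width $m$ is unrestricted. First I would restrict $f$ to the compact set $\Omega$, so that $f|_\Omega \in C(\Omega)$. Since $\sigma$ is nonpolynomial, Theorem~\ref{thm:uat1} (the ``if'' direction) guarantees that one-hidden-layer networks with dense weight matrices are dense in $C(\Omega)$; hence there exist $p \in \mathbb{N}$, a general matrix $B \in \mathbb{R}^{p\times n}$, and $c, d \in \mathbb{R}^p$ with
\[
\sup_{x\in\Omega}\,\bigl|f(x) - c^\tp \sigma(Bx + d)\bigr| \le \varepsilon .
\]
The remaining task is purely algebraic: realize the \emph{same} function $x \mapsto c^\tp\sigma(Bx+d)$ exactly in the form $a^\tp\sigma(Ax+b)$ with $A$ of the prescribed structure. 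Once the two functions coincide identically, the displayed bound transfers verbatim.

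For cases (i) and (ii) I would apply Lemma~\ref{lem:convert} to $B$, inserting rows to obtain an $m\times n$ Toeplitz (resp.\ Hankel) matrix $A$ with $m \ge p$. Inspecting the construction in Lemma~\ref{lem:convert}, the original rows of $B$ survive as a designated sub-collection of rows of $A$, indexed in order by some set $S \subseteq \{1,\dots,m\}$ with $|S| = p$. I then define $b,a \in \mathbb{R}^m$ to agree with $d$ and $c$ respectively on the coordinates indexed by $S$ and to vanish on all inserted coordinates. The inserted neurons still fire, but since their output weights are zero they contribute nothing, so
\[
a^\tp\sigma(Ax+b) = \sum_{i\in S} a_i\,\sigma\bigl((Ax)_i + b_i\bigr) = c^\tp\sigma(Bx+d),
\]
which is the desired identity.

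For the lower triangular case (iii) I would not even need Lemma~\ref{lem:convert}. With the paper's convention that $A=(a_{ij})$ is lower triangular when $a_{ij}=0$ for $i<j$, any row of index $i \ge n$ is unconstrained across all $n$ columns. I would therefore stack $n-1$ zero rows on top of $B$ to form $A \in \mathbb{R}^{(n-1+p)\times n}$, which is lower triangular because the top block is zero and the bottom block occupies rows $n,\dots,n+p-1$. Setting $a$ and $b$ to zero on the $n-1$ padding rows and to $c$, $d$ on the block holding $B$, the same cancellation yields $a^\tp\sigma(Ax+b) = c^\tp\sigma(Bx+d)$.

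I expect no genuine obstacle here: all the analytic content sits in Theorem~\ref{thm:uat1} and Lemma~\ref{lem:convert}, and what is left is the bookkeeping verifying that the extra (inserted or padded) neurons are annihilated by zero output weights so that the structured network reproduces the unstructured one exactly. The one point I would state carefully is \emph{why} a faithful copy of $B$ can be retained while still meeting the required pattern --- for the triangular case this is precisely the observation that rows beyond the $n$th are free, and for the Toeplitz and Hankel cases it is exactly the content of Lemma~\ref{lem:convert}.
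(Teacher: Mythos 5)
Your proposal is correct and follows essentially the same route as the paper: invoke Theorem~\ref{thm:uat1} for an unstructured one-hidden-layer approximant, then realize it exactly in structured form via Lemma~\ref{lem:convert} (Toeplitz/Hankel) or zero-row padding (lower triangular), annihilating the extra neurons with zero output weights. The only difference is cosmetic --- the paper pads with $n$ zero rows where you use $n-1$, both of which satisfy the lower-triangular condition $a_{ij}=0$ for $i<j$.
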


\begin{proof}
It follows from Theorem~\ref{thm:uat1} that for a given $f\in C(\mathbb{R}^n)$, there exist $c,d\in \mathbb{R}^{p}$, $p \in \mathbb{N}$, $B \in \mathbb{R}^{p\times n}$ so that 
\[
\sup_{x\in \Omega} \;  |f(x) - d^\tp \sigma(Bx + c)| \le \varepsilon.
\]
Here of course $B$ has no specific structure. We begin with the Toeplitz case. By Lemma~\ref{lem:convert}, we first transform $B \in \mathbb{R}^{p \times n}$ into  a Toeplitz matrix $A \in \mathbb{R}^{m \times n}$ for some $m \in \mathbb{N}$. Since $A$ is obtained from $B$ by inserting rows, let the rows $i_1,\dots, i_p$ of $A$ be rows $1,\dots,p$ of $B$. Now let $a \in \mathbb{R}^m$ be the vector whose $i_j$th entry is exactly the $j$th entry of $d$ and zeroes everywhere else. Likewise let $b \in \mathbb{R}^m$ be the vector whose $i_j$th entry is exactly the $j$th entry of $c$ and zeroes everywhere else. Then we clearly have $d^\tp \sigma(Bx + c) = a^\tp \sigma(Ax+b)$ and the required result follows. The Hankel case is identical.
For the remaining case, we set
\[
a =
\begin{bmatrix}
\mathbbm{0}_{n}\\
d
\end{bmatrix}, \quad
A =
\begin{bmatrix}
\mathbbm{0}_{n\times n} \\
B
\end{bmatrix}, \quad
b =
\begin{bmatrix}
\mathbbm{0}_n\\
c
\end{bmatrix},
\]
and observe that $d^\tp \sigma(Bx + c) = a^\tp \sigma(Ax+b)$. Hence the required result follows.
\end{proof}
Theorem~\ref{thm:uat1} is false if $A$ is required to be upper triangular.

\subsection{Fixed width, arbitrary depth}\label{sec:deep}

The one-layer arbitrary width case above is more of a curiosity. Modern neural networks are almost invariably multilayer and we now provide the result that applies to this case. We first show that the identity map on $\mathbb{R}^n$ may be approximated by essentially any continuous pointwise activation. This is a generalization of \citep[Lemma~4.1]{kidger2020universal}.
\begin{lemma}[Approximation of identity]\label{lem:approx}
Let $\sigma: \mathbb{R} \rightarrow \mathbb{R}$ be any continuous function that is continuously differentiable with nonzero derivative at some point $a \in \mathbb{R}^n$. Let $I: \mathbb{R}^n \to \mathbb{R}^n$ be the identity map. Then for any compact $\Omega \subseteq \mathbb{R}^n$ and any $\varepsilon > 0$, there exists a $\delta > 0$ such that whenever $0<|h|<\delta$, the function $\rho_h : \mathbb{R}^n \to \mathbb{R}^n$,
\begin{equation}\label{eq:rho}
    \rho_h(x) \coloneqq \frac{1}{h\sigma'(a)} [\sigma(hx + a\mathbbm{1}_n) - \sigma(a)\mathbbm{1}_n],
\end{equation}
satisfies
\[
\sup_{x\in \Omega} \;\lVert \rho_h(x) - I(x)\rVert  \le \varepsilon.
\]
\end{lemma}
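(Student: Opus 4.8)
The plan is to reduce the whole statement to a quantified, uniform version of the definition of the derivative of $\sigma$ at $a$, applied one coordinate at a time. The first observation is that $\rho_h$ acts on each coordinate independently: writing $x = (x_1,\dots,x_n)$, the $i$th component of the vector $hx + a\mathbbm{1}_n$ is $hx_i + a$, so applying $\sigma$ coordinatewise and subtracting $\sigma(a)\mathbbm{1}_n$ gives
\[
[\rho_h(x)]_i = \frac{\sigma(a + h x_i) - \sigma(a)}{h\,\sigma'(a)},
\]
which is exactly a difference quotient for $\sigma$ at $a$ with increment $hx_i$. As $h\to 0$ this tends to $\sigma'(a)x_i/\sigma'(a) = x_i$ (here we use $\sigma'(a)\neq 0$), so the content of the lemma is only that this convergence is uniform in $x$ over the compact set $\Omega$.

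First I would record the consequence of differentiability at $a$: for every $\eta > 0$ there is a $\delta' > 0$ with
\[
|\sigma(a + t) - \sigma(a) - \sigma'(a)\,t| \le \eta\,|t| \quad \text{whenever } |t| < \delta',
\]
which is just the statement that the error of the linear approximation is $o(t)$. Next I would invoke compactness of $\Omega$ to obtain a single bound $M \coloneqq \sup_{x\in\Omega}\max_i |x_i| < \infty$, valid for every coordinate of every point of $\Omega$. Setting $t = hx_i$, the increment satisfies $|t| = |h|\,|x_i| \le |h|M$, so taking $\delta \coloneqq \delta'/M$ (assuming $M>0$) forces $|t| < \delta'$ for all coordinates and all $x\in\Omega$ whenever $0 < |h| < \delta$. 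For such $h$ the displayed estimate yields, coordinatewise,
\[
\bigl| [\rho_h(x)]_i - x_i \bigr| = \frac{|\sigma(a + hx_i) - \sigma(a) - \sigma'(a)hx_i|}{|h|\,|\sigma'(a)|} \le \frac{\eta\,|hx_i|}{|h|\,|\sigma'(a)|} = \frac{\eta\,|x_i|}{|\sigma'(a)|} \le \frac{\eta M}{|\sigma'(a)|}.
\]

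Finally I would assemble the Euclidean norm from the $n$ coordinate estimates, obtaining $\lVert \rho_h(x) - I(x)\rVert \le \sqrt{n}\,\eta M/|\sigma'(a)|$, uniformly in $x\in\Omega$. Since the right-hand side is independent of $x$ and $h$, it suffices to fix $\eta \coloneqq \varepsilon\,|\sigma'(a)|/(\sqrt{n}\,M)$ at the outset, which determines $\delta'$ and hence $\delta$, and gives $\sup_{x\in\Omega}\lVert \rho_h(x) - I(x)\rVert \le \varepsilon$ for all $0 < |h| < \delta$, as required. There is no deep obstacle here: the entire argument is a single difference-quotient estimate, and the only point needing care is interchanging the scalar limit with the uniformity over the $n$ coordinates and over $x\in\Omega$. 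Compactness resolves this cleanly by furnishing the one bound $M$, while the factor $\sqrt{n}$ is harmless since $n$ is fixed; one need only note the trivial degenerate case $M=0$, i.e.\ $\Omega\subseteq\{0\}$, where $\rho_h(0) = 0 = I(0)$ for every $h$.
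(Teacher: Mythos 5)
Your proof is correct. The overall skeleton matches the paper's: work coordinatewise, use compactness of $\Omega$ to obtain a uniform bound $M$ (the paper calls it $L$) on all coordinates, control the difference quotient $[\sigma(a+hx_i)-\sigma(a)]/(h\sigma'(a)) - x_i$ uniformly, and pay a factor $\sqrt{n}$ to pass to the Euclidean norm. The key analytic ingredient differs, though. The paper invokes the mean value theorem to write $\sigma(a+hx_i)-\sigma(a) = hx_i\,\sigma'(\xi)$ and then uses continuity of $\sigma'$ near $a$ to make $|\sigma'(\xi)-\sigma'(a)|$ small; this genuinely uses the hypothesis that $\sigma$ is \emph{continuously} differentiable in a neighborhood of $a$. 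You instead use only the definition of the derivative at the single point $a$, in the quantified form $|\sigma(a+t)-\sigma(a)-\sigma'(a)t|\le\eta|t|$ for $|t|<\delta'$. Your route therefore proves a slightly stronger statement --- differentiability of $\sigma$ at $a$ with $\sigma'(a)\neq 0$ suffices, and neither the existence of $\sigma'$ elsewhere nor its continuity is needed --- while being no longer than the paper's argument. You also handle the degenerate case $M=0$, which the paper's choice $\delta=\eta/L$ silently assumes away by taking $L>0$. Both proofs are fine; yours is marginally more general and avoids the paper's minor typo-level slips (e.g.\ its unexplained $\sigma'(\alpha)$ and $|hx_1|$).
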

\begin{proof}
Subscript $i$ in this proof refers to the $i$th coordinate. As $\Omega$ is compact,  $|x_i| \le L$ for some $L > 0$ and for all $i=1,\dots,n$. Since the derivative $\sigma'$ is continuous, there exists $\eta > 0$ such that
\[
|\sigma'(b) - \sigma'(a)| < \frac{\sigma'(a) \varepsilon}{L\sqrt{n}}
\]
whenever $|b-a| \le \eta$.
Let $\delta = \eta/L$. Then for $0<|h|<\delta$, we have
\[
    |\rho_h(x)_i - x_i| = \left| \frac{\sigma(a+hx_i) - \sigma(a)}{h\sigma'(a)} - x_i\right|
   = \left|\frac{x_i\sigma'(\xi)}{\sigma'(\alpha)}  - x_i\right| 
     \le L \left| \frac{\sigma'(\xi) - \sigma'(a)}{\sigma'(a)}\right| \le \frac{\varepsilon}{\sqrt{n}}
\]
for some $\xi$ between $a+hx_i$ and $a$ by the mean value theorem. The last inequality follows from $|\xi - a| \le |hx_1| \le |h|L \le \eta$. Note that $\delta$ is independent of all $x_i$'s and therefore $x$. Hence we may take $\sup_{x\in \Omega}\lVert \, \cdot \, \rVert$ to get the required result.
\end{proof}

The proof of our main result below depends on two things: that we may use $\rho_h$ to approximate the identity map; and that if we scale the input of our activation by $h$ or the output by $1/h\sigma'(a)$, it does not affect the structure of our weight matrices --- Toeplitz, Hankel, and triangular structures are preserved under scalar multiplication.
\begin{theorem}[Universal approximation by structured neural networks II]\label{thm:depth}
Let $\Omega \subseteq \mathbb{R}^n$ be compact and $\sigma: \mathbb{R} \rightarrow \mathbb{R}$ be any uniformly continuous nonpolynomial function continuously differentiable with nonzero derivative on at least one in $\Omega$. For any  $f \in C(\mathbb{R}^n, \mathbb{R}^m)$ and any $\varepsilon > 0$,
\[
\sup_{x\in \Omega} \; \lVert f(x)-\nu(x)\rVert < \varepsilon.
\]
for some  neural network $\nu : \mathbb{R}^n \to \mathbb{R}^m$,
\[
    \nu(x) = A_k \sigma_{k-1} A_{k-1} \sigma_{k-1} \cdots  \sigma_2 A_{2}\sigma_1 A_1 x + b_k,
\]
where the weight matrices $A_1 \in \mathbb{R}^{(m+n+1) \times n}$,
\[
A_2,\dots,A_{k-1} \in \mathbb{R}^{(m+n+1)\times (m+n+1)},
\]
and $A_k \in \mathbb{R}^{m \times (m+n+1)}$ may be chosen to be
\begin{enumerate}[\upshape (i)]
  \item all Toeplitz,
  \item all Hankel,
  \item upper triangular for odd $i$ and lower triangular for even $i$;
\end{enumerate}
with bias vectors $b_1,\dots,b_{k-1} \in \mathbb{R}^{m + n +1}$,  $b_k \in \mathbb{R}^m$, and $\sigma_i(x) \coloneqq \sigma( x+ b_i)$.
\end{theorem}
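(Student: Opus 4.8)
\section*{Proof proposal}

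The plan is to reduce the statement to the unstructured deep narrow result and then convert each weight matrix into a product of structured factors, expanding the network in depth while holding the width fixed at $m+n+1$. First I would invoke Theorem~\ref{thm:uat2} to obtain a width-$(m+n+1)$ network
$g(x) = W_L \sigma(W_{L-1}\sigma(\cdots \sigma(W_1 x + \beta_1)\cdots)+\beta_{L-1}) + \beta_L$
with unstructured weight matrices and $\sup_{x\in\Omega}\lVert f(x) - g(x)\rVert < \varepsilon/2$; here the interior matrices $W_2,\dots,W_{L-1}$ are square of size $(m+n+1)\times(m+n+1)$, while $W_1$ is tall and $W_L$ is wide. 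The idea is to replace the action of each $W_i$ by a product of structured matrices. For case (i) I would apply Theorem~\ref{thm:toep} to every square $W_i$ to write $W_i = T_{i,1}\cdots T_{i,r_i}$ as a product of $2(m+n+1)+5$ Toeplitz matrices. The rectangular $W_1$ and $W_L$ are handled by the rectangular extension alluded to in the text: factor out a tall identity embedding $E\in\mathbb{R}^{(m+n+1)\times n}$ consisting of $I_n$ atop a zero block (respectively a wide identity $[\,I_m \mid 0\,]$ for $W_L$), both of which are themselves Toeplitz, so that $W_1 = \widehat{W}_1 E$ with $\widehat{W}_1$ square, and then Theorem~\ref{thm:toep} applies to $\widehat{W}_1$. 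Case (ii) is identical; for case (iii) I would instead use a classical LU-type factorization of each square $W_i$ into finitely many triangular matrices, inserting identity factors (which are simultaneously upper and lower triangular) to correct the parity so that the global pattern is upper for odd layers and lower for even layers, noting that the tall and wide identities are triangular as well.

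The crux of the conversion is that a product $T_{i,1}\cdots T_{i,r_i}$ cannot appear consecutively in a neural network without activations between the factors, so I would insert an approximate identity between every pair of adjacent structured factors. Concretely, between $T_{i,j}$ and $T_{i,j+1}$ I would place the map $\rho_h$ of Lemma~\ref{lem:approx}, which is as close to the identity as desired on any prescribed compact set once $h$ is small. The scalings hidden in $\rho_h(u) = \frac{1}{h\sigma'(a)}\sigma(hu + a\mathbbm{1}) - \frac{\sigma(a)}{h\sigma'(a)}\mathbbm{1}$ do not disturb the structure: the inner factor $h$ is absorbed into the matrix on the right and the outer factor $1/(h\sigma'(a))$ into the matrix on the left, since Toeplitz, Hankel, and triangular structures are all preserved under scalar multiplication, while the additive constant $-\frac{\sigma(a)}{h\sigma'(a)}\mathbbm{1}$ is pushed through the next matrix and folded into that layer's bias, and the interior shift $a\mathbbm{1}$ becomes the layer's bias vector. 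Between two consecutive blocks inherited from $g$ I would instead use the genuine activation $\sigma$ with bias $\beta_i$. The resulting object is a single network of fixed width $m+n+1$ and finite depth $k$ in which every weight matrix is a lone Toeplitz (resp.\ Hankel, resp.\ alternating triangular) matrix realized by the single activation $\sigma$ with $\sigma_i(x)=\sigma(x+b_i)$, exactly the form demanded by the statement.

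The main obstacle is the error analysis, since the approximate-identity substitutions occur at every one of the many inserted layers and their errors compound as they propagate. I would control this by a layerwise induction: writing $\Phi_\ell$ for the exact partial composition of $g$ up to layer $\ell$ and $\widetilde{\Phi}_\ell$ for its structured surrogate, I would bound $\lVert\widetilde{\Phi}_{\ell+1}-\Phi_{\ell+1}\rVert$ by the single-layer approximation error from Lemma~\ref{lem:approx} plus the propagated discrepancy, the latter being small because each layer map is uniformly continuous. This is precisely where the hypothesis that $\sigma$ is uniformly continuous enters, together with the fact that all intermediate values remain in a fixed compact set that can be enlarged slightly to absorb the perturbations. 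Since there are only finitely many layers and the weight matrices are fixed, and hence of bounded operator norm, choosing $h$ small enough drives the total accumulated error below $\varepsilon/2$; combined with the Kidger bound this gives $\sup_{x\in\Omega}\lVert f(x)-\nu(x)\rVert < \varepsilon$. The remaining points --- the rectangular version of Theorem~\ref{thm:toep} and the parity bookkeeping in the triangular case --- I expect to be routine once the identity-embedding trick and structure preservation under scaling are in place.
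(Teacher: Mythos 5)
Your proposal follows essentially the same route as the paper: invoke Theorem~\ref{thm:uat2} to get an unstructured width-$(m+n+1)$ network within $\varepsilon/2$, factor each square weight matrix via Theorem~\ref{thm:toep}, handle the rectangular first and last layers by padding with a tall/wide identity embedding (itself Toeplitz/triangular), insert the approximate identity $\rho_h$ of Lemma~\ref{lem:approx} between adjacent factors, absorb the scalars $h$ and $1/(h\sigma'(a))$ into the neighboring structured matrices and the additive constants into biases, and control the accumulated error by choosing the $h$'s successively using uniform continuity of $\sigma$ and compactness of the intermediate images. For cases (i) and (ii) this is exactly the paper's argument and is correct.

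The one genuine soft spot is case (iii). You appeal to ``a classical LU-type factorization of each square $W_i$ into finitely many triangular matrices,'' but the classical LU decomposition does not exist for every square matrix (e.g.\ $\begin{bsmallmatrix} 0 & 1\\ 1 & 0\end{bsmallmatrix}$ has no LU factorization), and the general remedy in linear algebra inserts a permutation factor, which is neither upper nor lower triangular, so it cannot be hidden in your scheme. You need one additional idea here, and the paper supplies two alternatives: either (a) observe that matrices with all leading principal minors nonzero are dense in $\mathbb{R}^{p\times p}$, so each $W_i$ can be \emph{approximated} to arbitrary accuracy by an LU-decomposable matrix without affecting the overall $\varepsilon$-bound; or (b) use the theorem of \citet{NDS} that every square matrix equals a product of \emph{three} triangular factors ($LUL$ or $ULU$), applying $ULU$ in odd layers and $LUL$ in even layers so that the factors alternate correctly across layer boundaries. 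Your remark about padding with identity factors to fix parity is fine once one of these existence statements is in hand, but without (a) or (b) the factorization step itself can fail. Everything else in your outline is routine to complete as you describe.
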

\begin{proof}
By Theorem~\ref{thm:uat2}, there is a neural network $\varphi$ of width $m+n+1$ such that $\sup_{x\in \Omega} \lVert f(x)-\varphi(x)\rVert < \varepsilon/2$. We will write $\varphi$ recursively as
\[
        \varphi(x)= B_k \varphi_k(x) + c_k
\]
with $\varphi_0(x) = x$ and $\varphi_{j+1}(x) = \sigma(B_j\varphi_j(x) + c_j)$, $j = 1,\dots, k-1$. Here $B_1 \in \mathbb{R}^{(m+n+1)\times n}$, $B_k \in \mathbb{R}^{m \times (m+n+1)}$, and $B_2, \dots,B_{k-1} \in \mathbb{R}^{(m+n+1)\times (m+n+1)}$.

By Theorem~\ref{thm:toep}, the square matrices $B_2, \dots,B_{k-1}$ may each be decomposed into a product of Toeplitz matrices:
\begin{equation}\label{eq:toepdecomp}
B_j = T^{(j)}_1T^{(j)}_2\cdots T^{(j)}_{r_j}.
\end{equation}
As for $B_1$, we have 
\[
B_1 = [B_1, \mathbbm{0}_{(m+n+1)\times (m+1)}] \begin{bmatrix} I_n \\ \mathbbm{0}_{(m+1) \times n} \end{bmatrix}
\]
and as $[I_n, \mathbbm{0}_{n\times (m+1)}]^\tp \in \mathbb{R}^{(m+n+1) \times n}$ is a rectangular Toeplitz matrix and Theorem~\ref{thm:toep} applies to the square matrix $[B_1, \mathbbm{0}_{(n+m+1)\times (m+1)}] \in \mathbb{R}^{(m+ n +1) \times (m + n +1)}$, we also have a Toeplitz decomposition for $B_1$. The argument applied to $B_1$ also applies to $B_k^\tp$. Hence we have
\[
B_1 = T^{(1)}_1\cdots T^{(1)}_{r_1}, \quad B_k = T^{(k)}_1\cdots T^{(k)}_{r_k}
\]
as well. We thus obtain
\[
\varphi(x) = T^{(k)}_1\cdots T^{(k)}_{r_k} \varphi_k(x) + c_k
\]
with $\varphi_0(x) = x$ and
\begin{equation}\label{eq:j}
    \varphi_j(x) = \sigma\bigl(T^{(j)}_1\cdots T^{(j)}_{r_j}\varphi_{j}(x) + c_j\bigr)
\end{equation}
for $j = 1,\dots,k-1$.

Let us fix $j$ and drop the superscripts to avoid notational clutter. Between each adjacent pair of Toeplitz matrices $T_i$ and $T_{i+1}$, we may insert an identity map  $I : \mathbb{R}^{n+m+1} \to \mathbb{R}^{n+m+1}$ and apply Lemma~\ref{lem:approx} to approximate $I$ by $\rho_{h_i}$ for some $h_i$ depending on  $T_i$ and $T_{i+1}$ to be chosen later. Since
\begin{align}
T_i \rho_{h_i} T_{i+1} x &= \frac{1}{h_i\sigma'(a)} T_i \sigma(h_i T_{i+1} x + a\mathbbm{1}_n) - \frac{\sigma(a)}{h_i\sigma'(a)} T_i\mathbbm{1}_n \nonumber \\
&\eqqcolon T_i' \sigma ( T_{i+1}' x  + b_{i+1}) + b_i \label{eq:remain}
\end{align}
each of these terms has the form we need. Observe that the matrices $T_i'  \coloneqq (1/h_i\sigma'(a)) \cdot T_i$ and $T_{i+1}'' \coloneqq h_i T_{i+1}$ remain Toeplitz matrices as the Toeplitz structure is invariant under scaling. We will replace each identity map between adjacent Toeplitz matrices in \eqref{eq:j} for each $i =1,\dots,r_j-1$; and then do this for each $j = 1,\dots,k-1$. By \eqref{eq:remain}, the resulting map is a $\sigma$-activated neural network with all weight matrices Toeplitz.  We will denote this neural network by $\nu$.

It remains to choose the $h_i$, or more accurately  the $h_{ij}$ since we have earlier dropped the index $j$ to simplify notation, in a way that
\[
\sup_{x\in \Omega} \;  \lVert f(x) - \nu(x) \rVert \le \varepsilon.
\]
Given that $\sup_{x\in \Omega} \lVert f(x)-\varphi(x)\rVert < \varepsilon/2$, it suffices to show
\begin{equation}\label{eq:final}
    \sup_{x\in \Omega} \;  \lVert \varphi(x) - \nu(x) \rVert \le \frac{\varepsilon}{2}.
\end{equation}
There is no loss of generality but a great gain in notational simplicity in assuming that $r_j = 2$ for $j=1,\dots,k$ and $k=2$, i.e.,
\begin{align*}
\varphi(x) &= T^{(2)}_1 T^{(2)}_2 \sigma(T^{(1)}_1 T^{(1)}_2 x + c_1) + c_2,\\
\nu(x) &= T^{(2)}_1 \rho_{h_2} T^{(2)}_2 \sigma(T^{(1)}_1 \rho_{h_1} T^{(1)}_2 x + c_1) + c_2.
\end{align*}
The reasoning is identical for the general case by repeating the argument for the $k=2=r_1=r_2$ case. Now set
\[
\psi(x) \coloneqq T^{(2)}_1 T^{(2)}_2 \sigma(T^{(1)}_1 \rho_{h_1} T^{(1)}_2 x + c_1) + c_2.
\]
We will first show that there exists $h_1 \neq 0$, such that
\begin{equation}\label{eq:h1}
\sup_{x\in \Omega} \; \lVert \varphi(x) - \psi(x)\rVert \le \frac{\varepsilon}{4}.
\end{equation}
Then we will prove that for the given $h_1$, there exists $h_2 \neq 0$ such that
\[
\sup_{x\in \Omega} \; \lVert \nu(x) - \psi(x)\rVert \le \frac{\varepsilon}{4}.
\]
By our assumption, $\sigma$ is uniformly continuous on $\mathbb{R}^n$. So there exists $\eta > 0$ such that
\[
|\sigma(a) - \sigma(b)| \le \frac{\varepsilon}{4\sqrt{n}\lVert T^{(1)}_1 \rVert\lVert T^{(1)}_2 \rVert}
\]
for any $a,b \in \mathbb{R}$ with $|a-b| \le \eta$. If we could choose $h_1 \ne 0$ so that
\begin{equation}\label{eq:int}
    \sup_{x\in \Omega} \; \lVert T^{(1)}_1 T^{(1)}_2 x  - T^{(1)}_1 \rho_{h_1} T^{(1)}_2 x \rVert \le \eta,
\end{equation}
then \eqref{eq:h1} would follow. Note that the $\sqrt{n}$ factor is necessary as $\sigma$ is applied coordinatewise to an $n$-dimensional vector.

Since $\Omega$ is compact, so is $\Omega_1 \coloneqq \{T^{(1)}_2 x : x\in \Omega\}$. Applying  Lemma~\ref{lem:approx} to $\Omega_1$ with $\eta / \lVert T^{(1)}_1\rVert$, we obtain $h_1 \neq 0$ with
\[
\sup_{y \in \Omega_1} \lVert \rho_{h_1}(y) - y \rVert \le \frac{\eta}{\lVert T^{(1)}_1\rVert}
\]
and thus
\[
    \sup_{x\in \Omega} \; \lVert T^{(1)}_1 T^{(1)}_2 x  - T^{(1)}_1 \rho_{h_1} T^{(1)}_2 x \rVert 
    \le \lVert T^{(1)}_1 \rVert \sup_{y \in \Omega_1}  \lVert \rho_{h_1}(y) - y \rVert \le \eta.
\]
Next set $\Omega_2 \coloneqq \{T^{(2)}_2 \sigma(T^{(1)}_1 \rho_{h_1} T^{(2)}_2 x + c_1) : x \in \Omega\}$, which is again compact. Applying Lemma~\ref{lem:approx} to $\Omega_2$ with $\varepsilon/4\lVert T^{(2)}_1 \rVert$, we obtain $h_2 \neq 0$ with
\[
\sup_{y \in \Omega_2} \lVert \rho_{h_2}(y) - y \rVert \le \frac{\varepsilon}{4\lVert T^{(2)}_1 \rVert}.
\]
Hence
\[
    \sup_{x\in \Omega} \; \lVert \nu(x) - \psi(x) \rVert 
   \le \lVert T^{(1)}_1 \rVert \sup_{y \in \Omega_2} \lVert \rho_{h_2}(y) - y\rVert \le \frac{\varepsilon}{4},
\]
which together with \eqref{eq:h1} gives us \eqref{eq:final} as required.

To summarize the argument, if
\[
\varphi(x) = T^{(2)}_1 T^{(2)}_2 \sigma(T^{(1)}_1 T^{(1)}_2 x + c_1) + c_2
\]
approximates $f$ to arbitrary accuracy, then we may choose $h_1$ and $h_2$ so that
\[
\nu(x) = T^{(2)}_1 \rho_{h_2} T^{(2)}_2 \sigma(T^{(1)}_1 \rho_{h_1} T^{(1)}_2 x + c_1) + c_2
\]
approximates $f$ to arbitrary accuracy and $\nu$ has all weight matrices Toeplitz. For general $k$ and $r_1,\dots,r_k$, we may similarly determine a finite sequence of $h_1,h_2,h_3,\dots$  successively and insert a copy of $\rho_{h_i}$ between each pair of Toeplitz matrices while maintaining the approximation error within $\varepsilon$. As a reminder, the inserted copy of $\rho_{h_i}$ results in a $\sigma$-activation with a bias as in \eqref{eq:rho}.

Furthermore, in the above proof, the only property of Toeplitz matrix we have used is that the Toeplitz structure is preserved under multiplication by any scalar. This scaling invariance also hold true for Hankel matrices and triangular matrices. Consequently the same arguments apply verbatim if we had used a Hankel decomposition \citep[Equation 2]{ye2016every}
\[
B_j = H^{(j)}_1H^{(j)}_2\cdots H^{(j)}_{r_j}
\]
in place of the Toeplitz decomposition in \eqref{eq:toepdecomp}. Indeed our proof extends to any decomposition of the weight matrices into a product of structured matrices whose structures are preserved under scaling.

Now there is a slight complication for the case of triangular matrices --- not every matrix will have a decomposition of the form
\begin{equation}\label{eq:LU}
B_j =  L^{(j)} U^{(j)}
\end{equation}
where $L_j$ is lower triangular and $U_j$ is upper triangular. Note that the standard LU decomposition of a matrix requires an additional permutation matrix multiplied either to the left or right \citep{GVL}. Nevertheless we could use the fact any square matrix all of whose principal minors are invertible has a decomposition of the form \eqref{eq:LU}, and since such matrices are dense in $\mathbb{R}^{n \times n}$, any matrix has an LU \emph{approximation} to arbitrary accuracy. 

For the rectangular weight matrices in the first and last layers, we note that they can be treated much in the same way as we did in the Toeplitz case. If $B$ is an $m\times n$ matrix and $m > n$, then write
\[
B = [B, \mathbbm{0}_{m\times (m-n)}] \begin{bmatrix} I_n \\ \mathbbm{0}_{(m-n) \times n} \end{bmatrix}.
\]
Since $[B, \mathbbm{0}_{m\times (m-n)}]$ is an $m \times m$ square matrix, it has an approximation  $[B, \mathbbm{0}_{m\times (m-n)}] \approx LU $ to arbitrary accuracy and therefore $B \approx LU'$  to arbitrary accuracy with $U' = U \begin{bsmallmatrix} I_n\\ \mathbbm{0}_{n \times (m-n)} \end{bsmallmatrix}$. The argument for $m > n$ is similar. In short, LU-decomposable matrices are also dense in $\mathbb{R}^{m\times n}$.

There is also an alternative approach by way of a little-known result of \citet{NDS}: Any matrix in $\mathbb{R}^{n \times n}$ can always be decomposed into a product of \emph{three} triangular matrices
\[
B_j =  L^{(j)}_1 U^{(j)} L^{(j)}_2.
\]
Note that this result may also be applied to the transpose of a matrix. So the conclusion is that any square matrix has an LUL decomposition and a ULU decomposition. The required result then follows from applying ULU decompositions to weight matrices in the odd layers and LUL decompositions to weight matrices in the even layers, adjusting for rectangular weight matrices with the argument in the previous paragraph. For example, for a neural network of the form
\[
B_2\sigma(B_1x + c),
\]
we decompose it into 
\[
L^{(2)}_1 U^{(2)} L^{(2)}_2 \sigma (U^{(1)}_1 L^{(1)} U^{(1)}_2x + c)
\]
and insert an appropriate activation between every successive factor as in the Toeplitz case to obtain an arbitrary accuracy approximation.
\end{proof}

Note that the neural network $\nu$ constructed in the proof of Theorem~\ref{thm:depth} has fixed width $m+n+1$ as in Theorem~\ref{thm:uat2} but a departure from Theorem~\ref{thm:uat2} is that $\sigma$ has to be uniformly continuous and not just continuous. Nevertheless almost all common activations like ReLU, sigmoid, hyperbolic tangent, leaky ReLU, etc, meet this requirement.

An implication of the proof of Theorem~\ref{thm:depth} is that \emph{fixed width} convolutional neural networks has the universal approximation property. While \citet{zhou2020universality} has also obtained a universal approximation theorem for convolutional neural networks, it requires arbitrary width. Our version below  requires a width of at most $m + n +1$ and, as will be evident from the proof, holds regardless of how the convolutional layers and fully connected layers in the network are ordered.

\begin{corollary}[Universal approximation theory for convolutional neural network]\label{cor:cnn}
Let $\Omega \subseteq \mathbb{R}^n$ be compact, $\sigma: \mathbb{R} \rightarrow \mathbb{R}$ be any uniformly continuous nonpolynomial function which is continuously differentiable at at least one point, with nonzero derivative at that point.  Then for any function $f \in C(\mathbb{R}^n, \mathbb{R}^m)$ and any $\varepsilon > 0$, there exists a deep convolutional neural network $\nu : \mathbb{R}^n \to \mathbb{R}^m$ with width $m+n+1$ such that
\[
\sup_{x\in \Omega} \; \lVert f(x)-\nu(x)\rVert < \varepsilon.
\]
\end{corollary}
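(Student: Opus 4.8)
The plan is to obtain the corollary directly from the Toeplitz case of Theorem~\ref{thm:depth}, using the elementary observation that multiplication by a Toeplitz matrix \emph{is} a one-dimensional discrete convolution. Under this identification, a $\sigma$-activated network all of whose weight matrices are Toeplitz is literally a convolutional neural network, and the fixed-width conclusion carries over with no extra work.

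First I would invoke Theorem~\ref{thm:depth}(i). The hypotheses imposed on $\sigma$ and $\Omega$ in the corollary are precisely those of that theorem, so for the given $f \in C(\mathbb{R}^n,\mathbb{R}^m)$ and $\varepsilon > 0$ there is a neural network
\[
\nu(x) = A_k \sigma_{k-1} A_{k-1} \cdots \sigma_1 A_1 x + b_k
\]
of width $m+n+1$, with every weight matrix $A_i$ Toeplitz, satisfying $\sup_{x\in\Omega}\lVert f(x)-\nu(x)\rVert < \varepsilon$.

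Next I would make the Toeplitz--convolution dictionary explicit. For a Toeplitz matrix $T=(t_{i-j})$ and input $x$, we have $(Tx)_i = \sum_j t_{i-j} x_j = (t*x)_i$, the discrete convolution of $x$ with the filter $t$ recording the constant values along the diagonals of $T$. Hence each layer $x \mapsto \sigma(A_i x + b_i)$ of $\nu$ is a convolutional layer: convolution with a filter, followed by a bias and a pointwise nonlinearity. Composing these layers exhibits $\nu$ as a convolutional neural network of width $m+n+1$, which is the assertion. Since this reinterpretation is purely local to each factor, it is insensitive to the layer layout, matching the remark that precedes the statement.

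The point needing care---and the main obstacle---is to match the full Toeplitz convolution used here with a workable definition of a convolutional layer. The clean route is to let each convolution use length-preserving zero padding together with a filter of length up to the layer width $m+n+1$ (so that a general, rather than banded, Toeplitz matrix is permitted); then every Toeplitz factor is exactly the matrix of one such convolution and no further decomposition is needed, while the hidden width stays $m+n+1$. If instead one insists on the small fixed-size kernels common in practice, one can factor the generating filter as a polynomial into low-degree factors and realize the convolution as a composition of short-kernel convolutions; this lengthens the network but preserves the width, at the cost of handling finite-length boundary effects. I would adopt the first route, since it yields the stated width bound immediately and leaves the approximation error inherited from Theorem~\ref{thm:depth} untouched.
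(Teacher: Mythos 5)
Your proposal is correct and follows essentially the same route as the paper: both rest on the observation that a Toeplitz weight matrix acts as a discrete convolution with stride $1$ and kernel read off its diagonals, applied to the fixed-width Toeplitz network furnished by Theorem~\ref{thm:depth}(i). The only difference is that the paper converts just the first $k'$ layers to Toeplitz, leaving the remaining layers as general fully-connected ones to match the conventional conv-then-FC architecture, whereas you convert every layer; since the paper itself notes the result is insensitive to how convolutional and fully-connected layers are ordered, this is immaterial.
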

\begin{proof}
Recall that a convolutional neural network is one that consists of several convolutional layers at the beginning and fully-connected layers consequently. Observe that in the proof of Theorem~\ref{thm:depth}, there is no need to make every layer Toeplitz --- we could replace any layer with a few Toeplitz layers or choose to keep it as is with general weight matrices while preserving the $\varepsilon$-approximation. So there is a $k$-layer neural network $g$ with first $k'$ layers Toeplitz and remaining $k-k'$ layers general such that $\sup_{x\in \Omega} \; \lVert f(x)-g(x)\rVert < \varepsilon$. Now observe that for any Toeplitz matrix
\[
T = 
\begin{bmatrix}
a_{0} & a_{-1} &  & a_{1-t}\\
a_{1} & a_{0} & \ddots & \\
& \ddots & \ddots & a_{-1}\\
a_{s-1} &  & a_{1} & a_{0}
\end{bmatrix}  \in \mathbb{R}^{s\times t},
\]
we may define a kernel $\kappa = [a_{s-1}, \dots , a_{0}, \dots, a_{1-t}]$. A layer with $T$ as weight matrix is then equivalent to a convolutional layer with kernel $\kappa$ and stride $1$. By doing this to every Toeplitz layer in $g$, we transform it into a convolutional neural network $\nu$ with $k'$ convolutional layers and $k - k'$ fully connected layer.
\end{proof}

\section{Training cost analysis}\label{sec:cost}

Here we perform a basic estimate of how much savings one may expect from imposing an LU or Toeplitz/Hankel structure on a neural network. The reduction in weight parameters is the most obvious advantage: an $m \times n$ upper triangular matrix requires $(n+1)n / 2$ parameters if $m \ge n$ and $(2n-m+1) m / 2$ if $m < n$; an $m \times n$ lower triangular matrix requires $(2m-n+1)n / 2$ parameters if $m \ge n$ and $(m+1)m/2$ if $m < n$; an $m \times n$ Toeplitz or Hankel matrix requires just $m + n -1$ parameters. However there is also a slightly less obvious advantage that we will discuss next.

The standard basic procedure in training a neural network involves a loss function $\ell$ on the output of network. Common examples include cross entropy loss, mean squared error loss, mean absolute error loss, negative log likelihood loss, etc. We calculate the gradient of $\ell$ under each weight parameter, and then update each parameter with the corresponding gradient scaled by a learning rate. The training process comprises two parts, forward propagation and backward propagation. In forward propagation, the neural network is evaluated to produce the output from the input. The computational cost is dominated by the matrix-vector multiplication in each layer:
\[
y_i = A_iz_i + b_i, \qquad z_{i+1} = \sigma(y_i)
\]
In backward propagation, we calculate the gradient of each parameter wwith chain rule. In the $i$th layer, the gradient is calculated from
\[
    \nabla_{z_i} \ell = A_i^\tp \nabla_{y_i} \ell, \qquad
    \nabla_{A_i} \ell = (\nabla_{y_i} \ell) \otimes z_i,
\]
where $\otimes$ denotes outer product.  Again, the computational cost is dominated by matrix-vector multiplication in each layer.

Given that training cost ultimately boils down to matrix-vector multiplications, we expect massive savings by exploiting such algorithms for structured matrices, particularly in the Toeplitz or Hankel cases, as these matrix-vector products can be computed in $O(n\log n)$ complexity, compared to the usual $O(n^{2})$ for general matrices. But even triangular matrices would immediately halve the cost of training.

\section{Experiments}\label{sec:exp}

We have conducted extensive experiments to demonstrate that neural networks with structured weight matrices such as those discussed in this article are almost as accurate as general ones. For a fair comparison, in each experiment we fixed the width and depth of the neural networks, changing only the type of weight matrices used, whether general (i.e., no structure), triangular, Toeplitz, or Hankel. In particular, all weight matrices have same dimensions, differing only in their structures or lack therefore. We have also taken care to avoid over-fitting in all our experiments, to ensure that we are not comparing one overfitted neural network with another. One telling sign of over-fitting is poor test accuracy, but all our experiments, test accuracy is reasonably high.

We performed our experiments with three common data sets: MNIST comprises a training set of 60,000 and a test set of 10,000 handwritten digits. CIFAR-10 comprises 60,000 $32\times32$ color images in 10 classes, with 6,000 images per class, divided into a training set of 50,000 and a test set of 10,000. WikiText-2 is a collection of over 100 million tokens extracted from verified `Good' and `Featured' articles on Wikipedia. 

We used our neural networks in three different contexts: as \emph{multilayer perceptrons}, i.e., the classic feed forward neural network with fully connected layers; as \emph{convolutional neural networks} that have convolutional, pooling, and fully connected layers  \citet{lecun1998gradient}; and as \emph{transformers}, a widely-used architecture based solely on attention mechanisms \citep{vaswani2017attention}.

\subsection{MNIST and multilayer perceptron:} For an image classification task with MNIST, we compare a three-layer multilayer perceptron with three general weight matrices against one where the three weight matrices are upper, lower, and upper triangular respectively; and another where all three weight matrices are Toeplitz.  We use a cross entropy loss, set learning rate to $0.01$, batch size to $20$, and trained for $50$ epochs. The mean, minimum, and maximum accuracy of each epoch over five runs are reported in Figure~\ref{mnist}. Our results show that the LU neural network has similar performance as the general neural network on both training accuracy and test accuracy. While the Toeplitz neural network sees poorer performance, its test accuracy, at greater than 95\%, is within acceptable standards.
\begin{figure}[htb]
    \centering
    \includegraphics[width=0.6\textwidth,trim={3ex 1ex 9ex 8ex},clip]{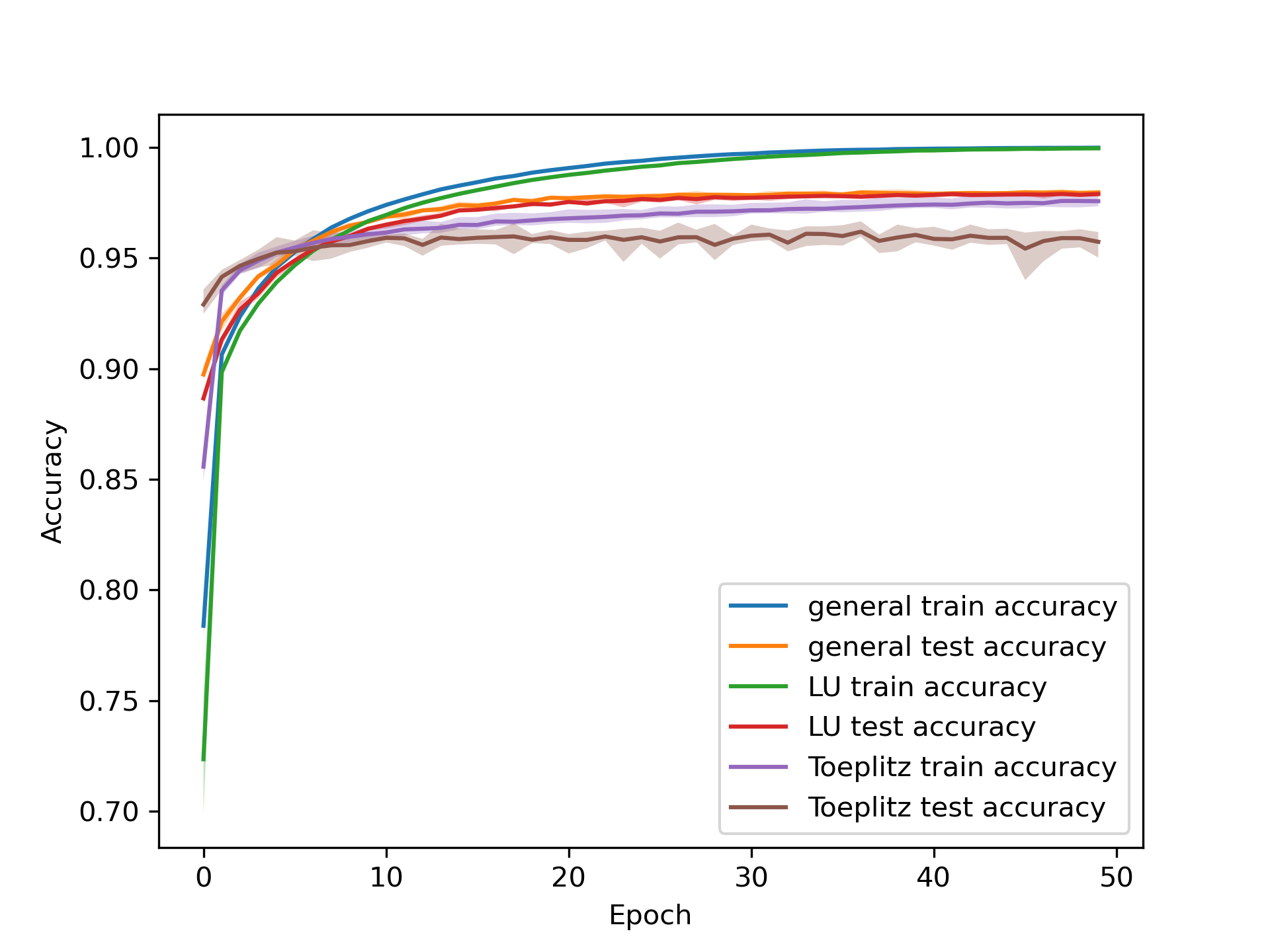} 
    \caption{Accuracy on MNIST}
    \label{mnist}
\end{figure}

\subsection{CIFAR-10 and convolutional neural networks:} For another image classification task with CIFAR-10, we compared a three-fully-connected-layer AlexNet \citep{krizhevsky2017imagenet} with three general weight matrices to one with three triangular weight matrices and another with three Toeplitz weight matrices. We set learning rate at $0.01$, batch size at $32$, and trained for $100$ epochs. The results are in Figure~\ref{cifar}. In this case, we see no significant difference in the performance --- LU AlexNet and Toeplitz AlexNet do just as well as the usual AlexNet.

\begin{figure}[htb]
    \centering
    \includegraphics[width=0.6\textwidth,trim={4ex 1ex 9ex 8ex},clip]{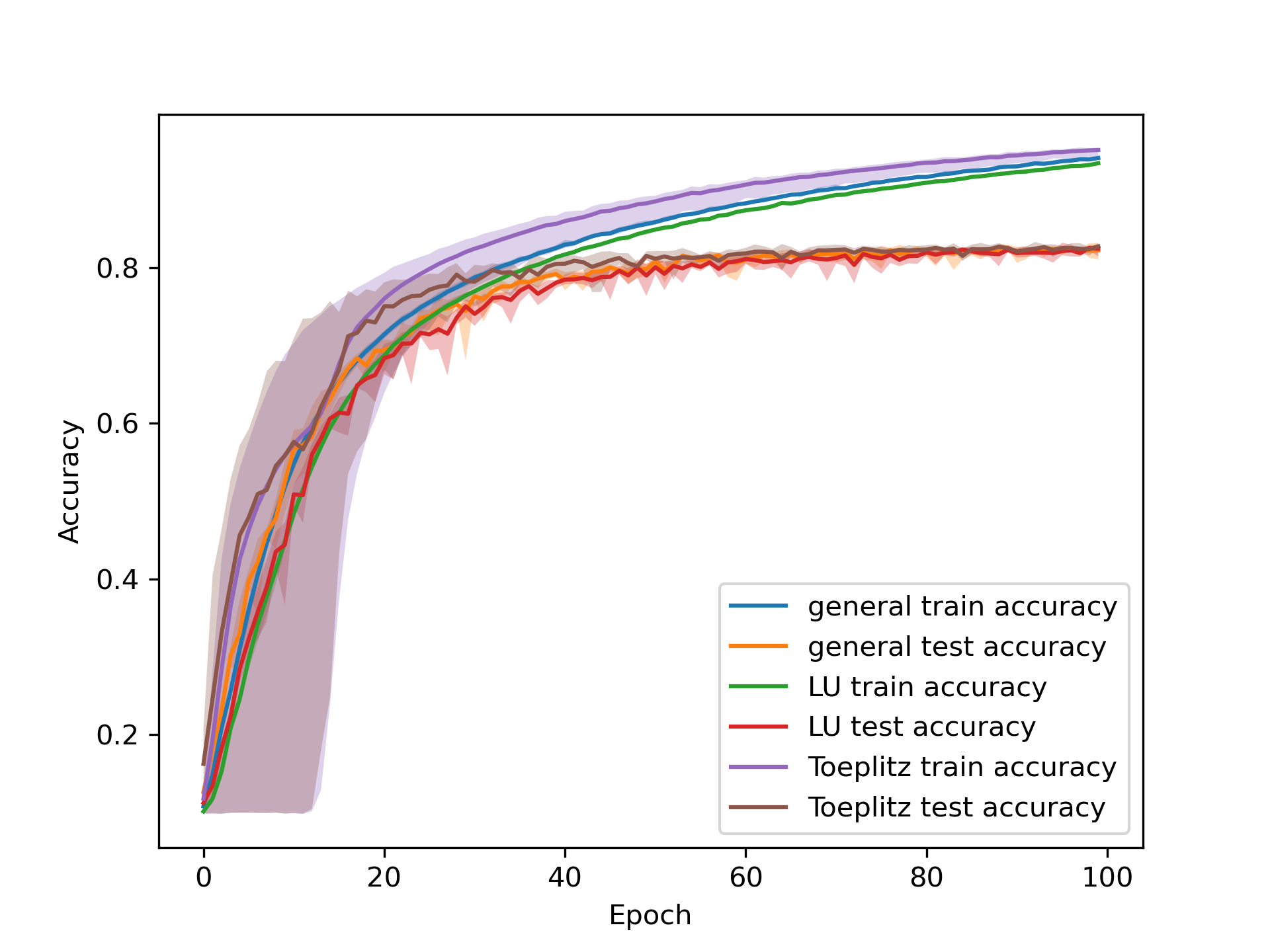} 
    \caption{Accuracy on CIFAR-10}
    \label{cifar}
\end{figure}

\subsection{WikiText and transformer:} We use a transformer with a two-head attention structure for a language modeling task with WikiText-2. As before, we compare three versions of the transformer where the fully connected layers are either general, LU, or Toeplitz neural networks.  We use a batch size of $20$, a learning rate of $5$, decaying by $0.2$ for every $10$ epochs. The mean, minimum, and maximum perplexity of each epoch over five runs are reported in Figure~\ref{wiki}. Recall that perplexity is the exponential of cross entropy loss, and thus a lower value represents a better result. Here the LU transformer performs as well as the standard transformer; the Toeplitz transfomer, while slightly less accurate, is nevertheless within acceptable standards.

\begin{figure}[htb]
    \centering
    \includegraphics[width=0.6\textwidth,trim={5ex 1ex 9ex 8ex},clip]{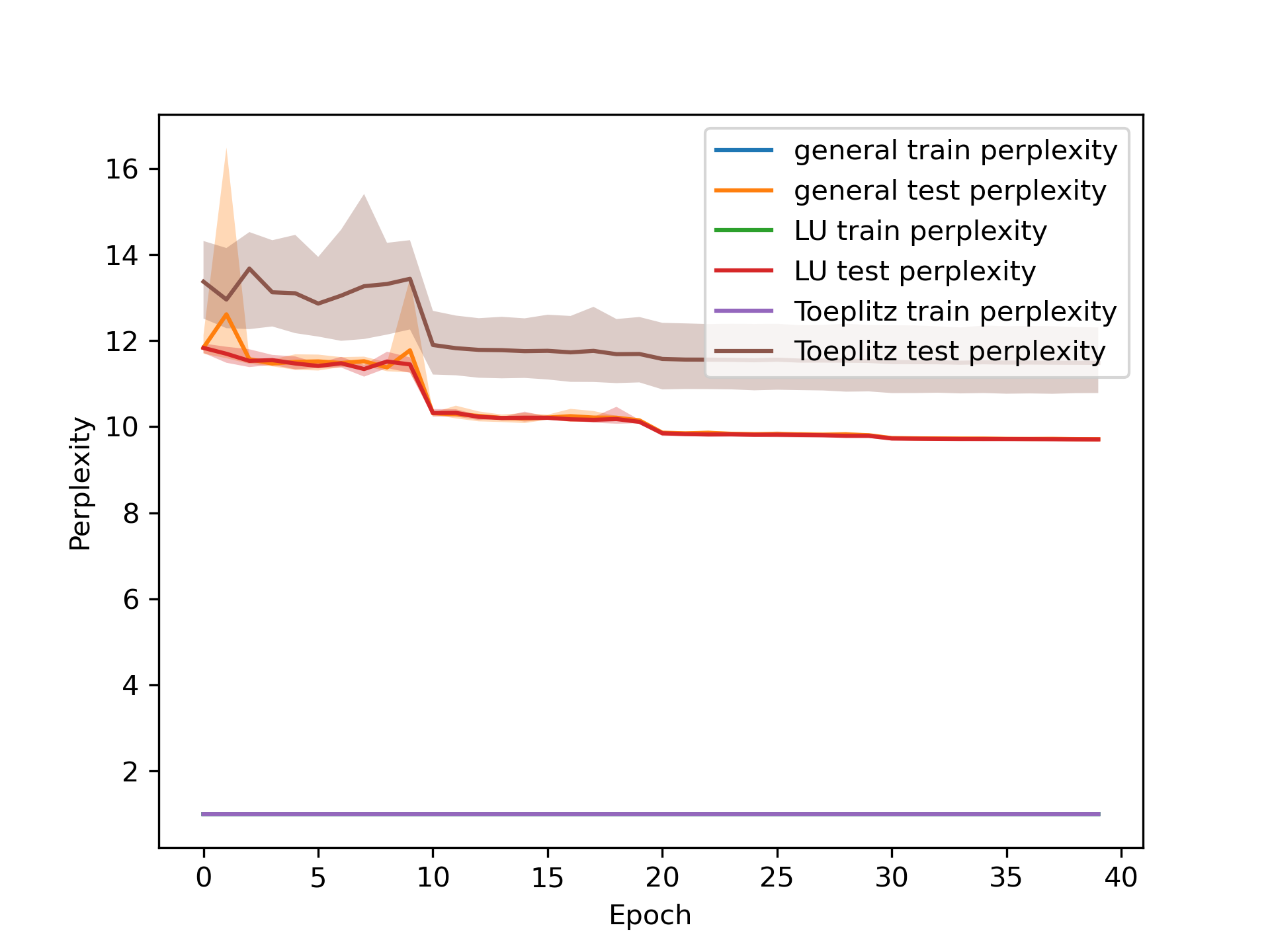} 
    \caption{Perplexity on Wiki-Text 2}
    \label{wiki}
\end{figure}

\section{Conclusion} Our results here may be viewed as a first step towards extending the standard matrix decompositions --- widely regarded as one of the top ten algorithms of the 20th century \citep{Stewart} --- from linear maps to continuous maps. Viewed in this light, there are many open questions: Is there a reasonable way to extend QR decomposition or singular value decomposition in a manner similar to what we did for LU and Toeplitz decompositions? Could one compute such decompositions in a principled way like their linear counterpart as opposed to fitting them with data? Can one design neuromorphic chips with lower energy cost or with lower gate complexity by exploiting such decompositions?

\subsection*{Acknowledgments} This work is partially supported by the DARPA grant HR00112190040 and the NSF grants DMS 1854831 and ECCS 2216912.

\bibliographystyle{abbrvnat}
\bibliography{references}

\begin{thebibliography}{32}
\providecommand{\natexlab}[1]{#1}
\providecommand{\url}[1]{\texttt{#1}}
\expandafter\ifx\csname urlstyle\endcsname\relax
  \providecommand{\doi}[1]{doi: #1}\else
  \providecommand{\doi}{doi: \begingroup \urlstyle{rm}\Url}\fi

\bibitem[Cybenko(1989)]{cybenko1989approximation}
G.~Cybenko.
\newblock Approximation by superpositions of a sigmoidal function.
\newblock \emph{Mathematics of control, signals and systems}, 2\penalty0
  (4):\penalty0 303--314, 1989.

\bibitem[Eldan and Shamir(2016)]{eldan2016power}
R.~Eldan and O.~Shamir.
\newblock The power of depth for feedforward neural networks.
\newblock In \emph{Conference on learning theory}, pages 907--940. PMLR, 2016.

\bibitem[Frankle and Carbin(2019)]{frankle2018the}
J.~Frankle and M.~Carbin.
\newblock The lottery ticket hypothesis: Finding sparse, trainable neural
  networks.
\newblock In \emph{International Conference on Learning Representations}, 2019.

\bibitem[Frankle et~al.(2020)Frankle, Dziugaite, Roy, and
  Carbin]{frankle2020linear}
J.~Frankle, G.~K. Dziugaite, D.~Roy, and M.~Carbin.
\newblock Linear mode connectivity and the lottery ticket hypothesis.
\newblock In \emph{International Conference on Machine Learning}, pages
  3259--3269. PMLR, 2020.

\bibitem[Golub and Van~Loan(1996)]{GVL}
G.~H. Golub and C.~F. Van~Loan.
\newblock \emph{Matrix computations}.
\newblock Johns Hopkins Studies in the Mathematical Sciences. Johns Hopkins
  University Press, Baltimore, MD, third edition, 1996.

\bibitem[Han et~al.(2015)Han, Pool, Tran, and Dally]{han2015learning}
S.~Han, J.~Pool, J.~Tran, and W.~Dally.
\newblock Learning both weights and connections for efficient neural network.
\newblock \emph{Advances in neural information processing systems}, 28, 2015.

\bibitem[Hanin and Sellke(2017)]{hanin2017approximating}
B.~Hanin and M.~Sellke.
\newblock Approximating continuous functions by relu nets of minimal width.
\newblock \emph{arXiv preprint arXiv:1710.11278}, 2017.

\bibitem[Hassibi and Stork(1992)]{hassibi1992second}
B.~Hassibi and D.~Stork.
\newblock Second order derivatives for network pruning: Optimal brain surgeon.
\newblock \emph{Advances in neural information processing systems}, 5, 1992.

\bibitem[Hornik(1991)]{hornik1991approximation}
K.~Hornik.
\newblock Approximation capabilities of multilayer feedforward networks.
\newblock \emph{Neural networks}, 4\penalty0 (2):\penalty0 251--257, 1991.

\bibitem[Hornik et~al.(1989)Hornik, Stinchcombe, and White]{HORNIK1989359}
K.~Hornik, M.~Stinchcombe, and H.~White.
\newblock Multilayer feedforward networks are universal approximators.
\newblock \emph{Neural Networks}, 2\penalty0 (5):\penalty0 359--366, 1989.

\bibitem[Inoue et~al.(2019)Inoue, Tokura, Nakano, and Ito]{inoue2019efficient}
T.~Inoue, H.~Tokura, K.~Nakano, and Y.~Ito.
\newblock Efficient triangular matrix vector multiplication on the gpu.
\newblock In \emph{International Conference on Parallel Processing and Applied
  Mathematics}, pages 493--504. Springer, 2019.

\bibitem[Johnson(2019)]{johnson2018deep}
J.~Johnson.
\newblock Deep, skinny neural networks are not universal approximators.
\newblock In \emph{International Conference on Learning Representations}, 2019.

\bibitem[Kelefouras et~al.(2014)Kelefouras, Kritikakou, Siourounis, and
  Goutis]{kelefouras2014methodology}
V.~I. Kelefouras, A.~S. Kritikakou, K.~Siourounis, and C.~E. Goutis.
\newblock A methodology for speeding up mvm for regular, toeplitz and
  bisymmetric toeplitz matrices.
\newblock \emph{Journal of Signal Processing Systems}, 77\penalty0
  (3):\penalty0 241--255, 2014.

\bibitem[Kidger and Lyons(2020)]{kidger2020universal}
P.~Kidger and T.~Lyons.
\newblock Universal approximation with deep narrow networks.
\newblock In \emph{Conference on learning theory}, pages 2306--2327. PMLR,
  2020.

\bibitem[Krizhevsky et~al.(2017)Krizhevsky, Sutskever, and
  Hinton]{krizhevsky2017imagenet}
A.~Krizhevsky, I.~Sutskever, and G.~E. Hinton.
\newblock Imagenet classification with deep convolutional neural networks.
\newblock \emph{Communications of the ACM}, 60\penalty0 (6):\penalty0 84--90,
  2017.

\bibitem[LeCun et~al.(1989)LeCun, Denker, and Solla]{lecun1989optimal}
Y.~LeCun, J.~Denker, and S.~Solla.
\newblock Optimal brain damage.
\newblock \emph{Advances in neural information processing systems}, 2, 1989.

\bibitem[LeCun et~al.(1998)LeCun, Bottou, Bengio, and
  Haffner]{lecun1998gradient}
Y.~LeCun, L.~Bottou, Y.~Bengio, and P.~Haffner.
\newblock Gradient-based learning applied to document recognition.
\newblock \emph{Proceedings of the IEEE}, 86\penalty0 (11):\penalty0
  2278--2324, 1998.

\bibitem[Li et~al.(2016)Li, Kadav, Durdanovic, Samet, and Graf]{li2016pruning}
H.~Li, A.~Kadav, I.~Durdanovic, H.~Samet, and H.~P. Graf.
\newblock Pruning filters for efficient convnets.
\newblock \emph{arXiv preprint arXiv:1608.08710}, 2016.

\bibitem[Lin and Jegelka(2018)]{lin2018resnet}
H.~Lin and S.~Jegelka.
\newblock Resnet with one-neuron hidden layers is a universal approximator.
\newblock \emph{Advances in neural information processing systems}, 31, 2018.

\bibitem[Lu et~al.(2017)Lu, Pu, Wang, Hu, and Wang]{lu2017expressive}
Z.~Lu, H.~Pu, F.~Wang, Z.~Hu, and L.~Wang.
\newblock The expressive power of neural networks: A view from the width.
\newblock \emph{Advances in neural information processing systems}, 30, 2017.

\bibitem[Malach et~al.(2020)Malach, Yehudai, Shalev-Schwartz, and
  Shamir]{malach2020proving}
E.~Malach, G.~Yehudai, S.~Shalev-Schwartz, and O.~Shamir.
\newblock Proving the lottery ticket hypothesis: Pruning is all you need.
\newblock In \emph{International Conference on Machine Learning}, pages
  6682--6691. PMLR, 2020.

\bibitem[Morcos et~al.(2019)Morcos, Yu, Paganini, and Tian]{morcos2019one}
A.~Morcos, H.~Yu, M.~Paganini, and Y.~Tian.
\newblock One ticket to win them all: generalizing lottery ticket
  initializations across datasets and optimizers.
\newblock \emph{Advances in neural information processing systems}, 32, 2019.

\bibitem[Nagarajan et~al.(1999)Nagarajan, Devasahayam, and Soundararajan]{NDS}
K.~R. Nagarajan, M.~P. Devasahayam, and T.~Soundararajan.
\newblock Products of three triangular matrices.
\newblock \emph{Linear Algebra and its Applications}, 292\penalty0
  (1-3):\penalty0 61--71, 1999.

\bibitem[Park et~al.(2021)Park, Yun, Lee, and Shin]{park2021minimum}
S.~Park, C.~Yun, J.~Lee, and J.~Shin.
\newblock Minimum width for universal approximation.
\newblock In \emph{International Conference on Learning Representations}, 2021.

\bibitem[Pinkus(1999)]{pinkus1999approximation}
A.~Pinkus.
\newblock Approximation theory of the mlp model in neural networks.
\newblock \emph{Acta numerica}, 8:\penalty0 143--195, 1999.

\bibitem[Stewart(2000)]{Stewart}
G.~W. Stewart.
\newblock The decompositional approach to matrix computation.
\newblock \emph{Comput. Sci. Eng.}, 2\penalty0 (1):\penalty0 50--59, 2000.

\bibitem[Telgarsky(2016)]{telgarsky2016benefits}
M.~Telgarsky.
\newblock Benefits of depth in neural networks.
\newblock In \emph{Conference on learning theory}, pages 1517--1539. PMLR,
  2016.

\bibitem[Vaswani et~al.(2017)Vaswani, Shazeer, Parmar, Uszkoreit, Jones, Gomez,
  Kaiser, and Polosukhin]{vaswani2017attention}
A.~Vaswani, N.~Shazeer, N.~Parmar, J.~Uszkoreit, L.~Jones, A.~N. Gomez,
  {\L}.~Kaiser, and I.~Polosukhin.
\newblock Attention is all you need.
\newblock \emph{Advances in neural information processing systems}, 30, 2017.

\bibitem[Vershynin(2020)]{vershynin2020memory}
R.~Vershynin.
\newblock Memory capacity of neural networks with threshold and rectified
  linear unit activations.
\newblock \emph{SIAM Journal on Mathematics of Data Science}, 2\penalty0
  (4):\penalty0 1004--1033, 2020.

\bibitem[Ye and Lim(2016)]{ye2016every}
K.~Ye and L.-H. Lim.
\newblock Every matrix is a product of toeplitz matrices.
\newblock \emph{Foundations of Computational Mathematics}, 16\penalty0
  (3):\penalty0 577--598, 2016.

\bibitem[Yun et~al.(2019)Yun, Sra, and Jadbabaie]{yun2019small}
C.~Yun, S.~Sra, and A.~Jadbabaie.
\newblock Small relu networks are powerful memorizers: a tight analysis of
  memorization capacity.
\newblock \emph{Advances in Neural Information Processing Systems}, 32, 2019.

\bibitem[Zhou(2020)]{zhou2020universality}
D.-X. Zhou.
\newblock Universality of deep convolutional neural networks.
\newblock \emph{Applied and computational harmonic analysis}, 48\penalty0
  (2):\penalty0 787--794, 2020.

\end{thebibliography}
\end{document}